\documentclass{article}

\usepackage[preprint]{neurips_2026}

\usepackage[utf8]{inputenc} 
\usepackage[T1]{fontenc}    
\usepackage{hyperref}       
\usepackage{url}            
\usepackage{booktabs}       
\usepackage{amsfonts}       
\usepackage{nicefrac}       
\usepackage{microtype}      
\usepackage{xcolor}         

\usepackage{amsmath}
\usepackage{wasysym}
\usepackage{enumitem}

\usepackage{booktabs}
\usepackage{multirow}
\usepackage{amsmath, amssymb, amsthm}

\usepackage{booktabs}
\usepackage{array}
\usepackage{makecell}
\usepackage{adjustbox}
\usepackage[table]{xcolor}
\usepackage{pifont}

\usepackage{amsmath,amssymb}
\usepackage{booktabs}
\usepackage[table]{xcolor}
\usepackage{graphicx}
\usepackage{array}

\usepackage{algorithm}
\usepackage{algpseudocode}
\algrenewcommand\algorithmicreturn{\textbf{Return:}}

\usepackage{booktabs}
\usepackage{multirow}
\usepackage[table]{xcolor}
\usepackage{graphicx}

\definecolor{lightred}{RGB}{252,235,235}
\definecolor{lightgreen}{RGB}{235,248,235}

\usepackage{booktabs}
\usepackage{multirow}
\usepackage[table]{xcolor}
\usepackage{colortbl}
\usepackage{array}

\definecolor{badred}{RGB}{248,228,228}
\definecolor{goodgreen}{RGB}{226,242,226}
\definecolor{warnorange}{RGB}{252,238,220}

\usepackage{booktabs}
\usepackage[table]{xcolor}
\usepackage{array}
\usepackage{tabularx}
\usepackage{amsmath}
\usepackage{amssymb}
\usepackage[most]{tcolorbox}

\definecolor{plugcoral}{RGB}{250,219,216}
\definecolor{plugcoraldeep}{RGB}{245,196,190}
\definecolor{tubegreen}{RGB}{198,224,180}
\definecolor{tubedeep}{RGB}{129,166,103}
\definecolor{refgray}{RGB}{233,236,239}
\definecolor{elbolavender}{RGB}{226,230,242}
\definecolor{gapamber}{RGB}{253,240,200}
\definecolor{baselinecream}{RGB}{248,243,228}
\definecolor{darkamber}{RGB}{156,113,21}
\definecolor{violationred}{RGB}{183,28,28}

\usepackage{titlesec}

\newtheorem{definition}{Definition}[section]

\definecolor{lightred}{RGB}{252,235,235}
\definecolor{lightgreen}{RGB}{235,248,235}

\definecolor{goodbg}{RGB}{232,245,233}
\definecolor{badbg}{RGB}{255,235,238}

\newcommand{\goodterm}[1]{%
  \begingroup
  \setlength{\fboxsep}{1pt}%
  \colorbox{goodbg}{\ensuremath{\displaystyle #1}}%
  \endgroup
}
\newcommand{\badterm}[1]{%
  \begingroup
  \setlength{\fboxsep}{1pt}%
  \colorbox{badbg}{\ensuremath{\displaystyle #1}}%
  \endgroup
}

\definecolor{goodbg}{RGB}{231,245,233}
\definecolor{warnbg}{RGB}{255,249,219}
\definecolor{badbg}{RGB}{255,235,238}
\definecolor{goodfg}{RGB}{27,94,32}
\definecolor{badfg}{RGB}{183,28,28}

\newcommand{\cmark}{\ding{51}}
\newcommand{\xmark}{\ding{55}}

\usepackage{booktabs}
\usepackage{multirow}
\usepackage{array}
\usepackage{colortbl}
\usepackage[table]{xcolor}
\usepackage{makecell}

\definecolor{tubegreen}{RGB}{226,239,218}
\definecolor{plugred}{RGB}{248,228,228}
\definecolor{refgray}{RGB}{240,240,240}
\definecolor{ubblue}{RGB}{222,232,246}

\newcolumntype{P}[1]{>{\raggedright\arraybackslash}p{#1}}

\usepackage{subcaption}
\usepackage[most]{tcolorbox}
\usepackage{wrapfig}
\allowdisplaybreaks
\usepackage{tikz}
\usetikzlibrary{arrows.meta,calc}

\newtheorem{proposition}{Proposition}[section]

\title{TUBE: Tangent Upper Bound on Evidence \\ for Discrete Diffusion Language Models}

\author{%
  \textbf{Arseny Ivanov}$^{1,2,3}$, \textbf{Sergei Kholkin}$^{2}$, \textbf{Vladislav Gromadskii}$^{2}$,\\
  \textbf{Grigoriy Ksenofontov}$^{2,4}$, \textbf{Ivan Oseledets}$^{1,2}$, \textbf{Alexander Korotin}$^{2,1}$%
}

\begin{document}

\maketitle

\renewcommand{\thefootnote}{}%
\footnotetext{%
  $^{1}$\,AXXX, Russia; \quad
  $^{2}$\,Applied AI Institute, Moscow, Russia; \quad
  $^{3}$\,HSE University, Moscow, Russia. \quad
  $^{4}$\,MIRAI, Russia.\\
  $^{*}$\,Correspondence to: Arseny Ivanov \texttt{<a5r5s5e5n5y@gmail.com>}, Alexander Korotin \texttt{<iamalexkorotin@gmail.com>}%
}%
\renewcommand{\thefootnote}{\arabic{footnote}}%
\setcounter{footnote}{0}

\begin{abstract}
    Log-likelihood is a standard metric for evaluating generative models. Unfortunately, in contrast to autoregressive models (ARMs), discrete diffusion models generally do not admit exact computation of this quantity. Existing evaluations, therefore, rely on the evidence lower bound (ELBO), leaving unclear how much higher the true value may be. We address this by introducing the \textbf{Tangent Upper Bound on Evidence} (\textbf{TUBE}), a variational upper bound on log-likelihood that admits an unbiased Monte Carlo estimator. Our TUBE extends across latent-variable models, including masked diffusion models (MDMs), any-order ARMs (AO-ARMs), and block variants of both. Applied to block MDMs and block AO-ARMs, TUBE reveals our key empirical finding that these models lie strictly below the exact ARM baseline, showing that ARMs still dominate in likelihood.
\end{abstract}

\begin{figure}[h]        
  \centering         
  \includegraphics[width=1.0\textwidth]{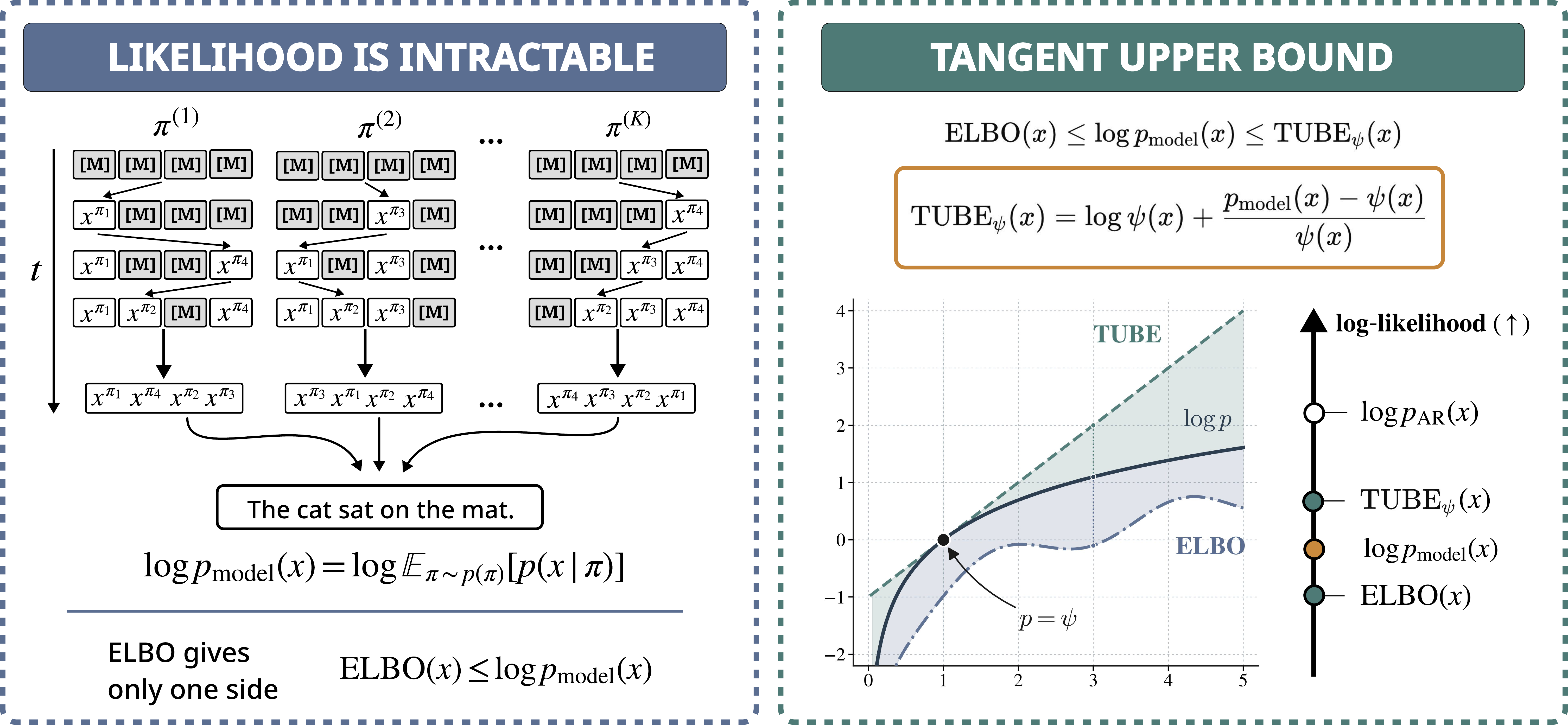}
  \caption{\textbf{A tight tangent upper bound on $\log p_{\mathrm{model}}(x)$.} Our TUBE bounds the intractable marginal from above using a tractable surrogate $\psi$, with equality at $\psi=p_{\mathrm{model}}$.}
  \label{fig:tube}   
\end{figure}

\section{Introduction}
\emph{Autoregressive models} (ARMs) remain a central paradigm in language modeling, supported by strong empirical scaling behavior in large-scale settings \citep{kaplan2020scaling,hoffmann2022training}. However, this efficiency arises from imposing a fixed autoregressive decomposition, which makes generation inherently order-dependent. At the same time, the best ordering typically depends on the domain and task, and learned or alternative orderings can outperform standard fixed choices in practice \citep{li2021discovering,wang2025learningorder}. This has motivated a growing line of work on \emph{any-order autoregressive models} (AO-ARMs) \citep{uria2014deep,shih2022training} and related \emph{masked diffusion models} (MDMs) \citep{austin2021structured,shi2024simplified,sahoo2024mdlm}, which both replace a single fixed decomposition with a probabilistic family of generation orderings. 

Despite their different theoretical constructions, AO-ARMs and MDMs both generalize beyond left-to-right order by repeatedly selecting token positions at random and filling in the corresponding values. MDMs further extend this procedure by allowing multiple tokens to be generated in a single step, enabling faster inference. Beyond these fully random ordering schemes, blockwise generation provides a mixed alternative \citep{arriola2025block}, using a fixed autoregressive ordering over blocks and a random ordering within each block. This retains block-level autoregressive structure, enabling techniques such as KV caching. Together, these features make AO-ARMs and MDMs viable alternatives to strictly autoregressive generation. More broadly, both have been applied beyond standard left-to-right language modeling to other discrete domains, including images \citep{pang2025randar,austin2021structured}, graphs \citep{kelvinius2024discriminator,seo2025learning}, molecular sequences \citep{lee2025genmol}, and vector-quantized image representations \citep{gu2022vector}. In language modeling, recent large-scale MDMs have been shown to be competitive in the generation quality to ARMs \citep{nie2025llada,bie2025llada2,ye2025dream7b,monsefi2026fsdfm}.


At the same time, principled \textbf{evaluation} of such models calls for \textbf{log-likelihood}, which remains the canonical measure of distributional fit. While ARM enables exact log-likelihood evaluation, for AO-ARM and MDM, on the other hand, this quantity is typically intractable. It is instead replaced by the \emph{evidence lower bound} (ELBO), or related approximations \citep{sahoo2024mdlm,haxholli2025perplexity,jeon2025itdd}. However, the ELBO does not indicate how much higher the true log-likelihood may be. As a result, ELBO-based evaluation alone cannot reliably assess the likelihood of AO-ARMs and MDMs, nor support rigorous comparison with ARM baselines.





Recent work has made this issue increasingly explicit both for MDMs and for broader classes of latent-variable models. For MDMs, exact log-likelihood evaluation is available only in special cases, such as deterministic unmasking \citep{turok2026duel}, while recent work has also proposed upper-bound estimator \citep{wang2026spg}. In parallel, the variational-inference literature has developed upper-bound approaches for estimating log-likelihood in general latent-variable models \citep{dieng2017chivi,struski2023bounding}. In practice, however, exact methods apply only in narrow fixed-order settings, and upper-bound estimators are hard to use reliably because they become biased when a nonlinear function is applied after Monte Carlo approximation. This leaves open the problem of reliable finite-sample log-likelihood estimation for AO-ARMs and MDMs, which we address in this paper. Our \textbf{contributions} are as follows:
\begin{itemize}[leftmargin=*]
    \item \textbf{Method.} We propose the \emph{Tangent Upper Bound on Evidence} (TUBE), a variational pointwise upper bound on the log-likelihood with a tractable surrogate and an unbiased Monte Carlo estimator (\S\ref{sec:method}). Together with the ELBO, TUBE yields a two-sided localization of the log-likelihood.
    \item \textbf{Analysis.} We evaluate pretrained block AO-ARMs and MDMs and identify a clear empirical likelihood gap relative to standard ARM baselines (\S\ref{sec:experiments}).
\end{itemize}

\textbf{Notation.} We write $\mathcal{V}\!=\!\{1,\ldots,V\}^{L}$ for the space of length-$L$ token sequences and $x\!=\!(x^1,\ldots,x^L)\!\in\!\mathcal{V}$ for a data sample. We use $p$ to denote distributions, with subscripts indicating the corresponding distribution when needed, e.g., $p_{\mathrm{model}}$. The generation ordering is denoted by $\pi$. By $\mathcal{S}$ we denote single-token orderings, where each step reveals one position, and $\mathcal{G}$ denotes grouped orderings, where a step may reveal multiple positions. For an index set $\pi_t$ at step $t\in\{1,\dots,T\}$, $x^{\pi_t}$ denotes the tokens of $x$ at positions in $\pi_t$, and $\pi_{<t}$ denotes the positions revealed before that step.

\section{Background}
\label{sec:background}
In this section, we first recall ARMs, including their definition and training procedure (\S\ref{sec:arm}). We then generalize this view to AO-ARMs (\S\ref{sec:ao-arm}) and relate them to MDMs (\S\ref{sec:mdm}). Finally, we introduce block variants of the latter two (\S\ref{sec:bm}) and discuss the challenges of likelihood evaluation (\S\ref{sec:problematics}).

\subsection{Autoregressive models (ARM)}
\label{sec:arm}
We consider the problem of generating a sequence $x=(x^1,\dots,x^L)\in\mathcal{V}$ of length $L$, where each $x^l$ denotes a token. ARMs $p_{\text{ARM}}(x)$ model this sequence by generating one token at each step $t\in\{1,\dots,L\}$ according to a fixed generation order $\pi\in\mathcal{S}$:
\vspace{-2mm}
\begin{equation}
    p_{\text{ARM}}(x)
    = p(x^{\pi_1}, \dots, x^{\pi_L})
    = p(x^{\pi_1}) \dots p(x^{\pi_L} | x^{\pi_{L-1}}, \dots, x^{\pi_1}) = \prod_{t=1}^{L} p(x^{\pi_t} | x^{\pi_{<t}}),
    \label{eq:ar}
\end{equation}
where $\pi_t$ denotes the single position generated at step $t$, so the full sequence $x$ is generated in $L$ steps.

For ARMs, log-likelihood can be evaluated directly due to the fixed decomposition in \eqref{eq:ar}:
\begin{align}
    \log p_{\text{ARM}}(x)
    = \sum_{t=1}^{L} \log p(x^{\pi_t} | x^{\pi_{<t}}).
\end{align}
Common choices of $\pi$ include left-to-right orderings for text \citep{radford2018improving} and raster-scan orderings for images \citep{chen2020generative}. Fixed orderings make training and log-likelihood evaluation tractable. However, they force the model to generate variables in one prescribed order, even when other orders may be equally natural or more suitable for the data.

\begin{figure}[t]    
      \centering         
      \includegraphics[width=1.0\textwidth]{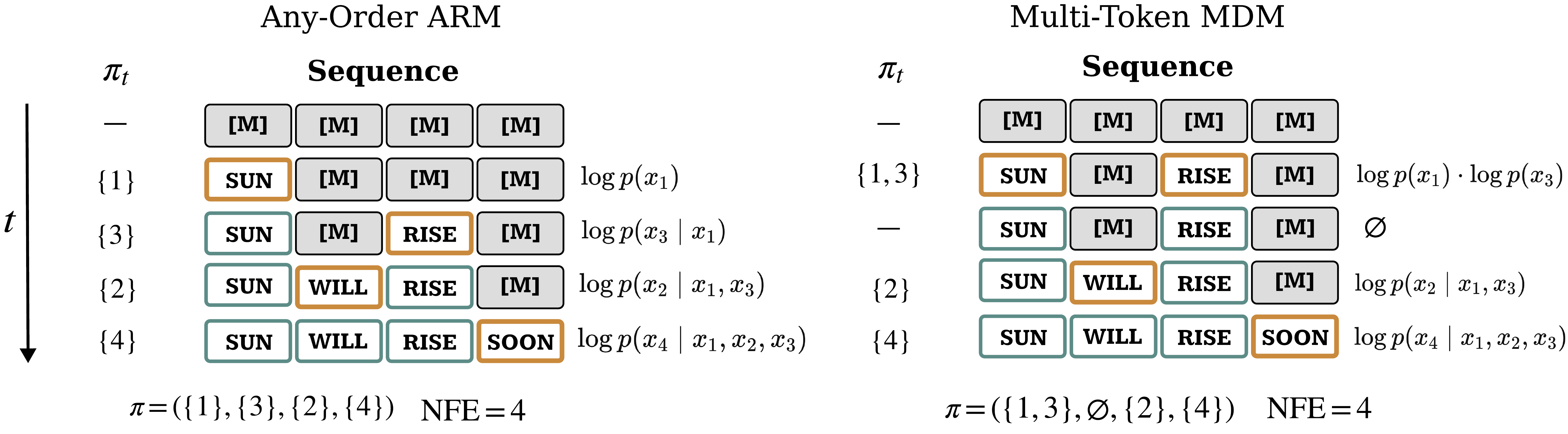}
      \caption{\textbf{AO-ARM and MDM.} Both define $p_{\mathrm{model}}(x)$ via a latent generation ordering $\pi$. AO-ARM reveals one token per step ($\pi\in\mathcal{S}$). MDM allows zero, one, or many tokens per step ($\pi\in\mathcal{G}$).}
      \label{fig:ao_ar_vs_md}
\end{figure} 

\subsection{Any-order autoregressive models (AO-ARM)}
\label{sec:ao-arm}
To reduce dependence on a fixed order, AO-ARMs \citep{uria2014deep,shih2022training} model the data distribution by averaging over autoregressive decompositions induced by different orderings:
\begin{equation}
    p_{\text{AO-ARM}}(x)
    = \mathbb{E}_{\pi \sim p(\pi)} \left[ p(x | \pi) \right]
    = \mathbb{E}_{\pi \sim p(\pi)} \left[ \prod_{t=1}^{L} p(x^{\pi_t} | x^{\pi_{<t}}) \right],
    \label{eq:ao_arm}
\end{equation}
where $p(\pi)$ is typically a uniform distribution over orderings $\pi\in\mathcal{S}$, and generation takes $L$ steps.

From this perspective, AO-ARMs can be viewed as latent-variable models in which the ordering $\pi$ is latent, thereby increasing modeling flexibility. Unfortunately, exact log-likelihood computation is generally infeasible due to the large size of $\mathcal{S}$. One may therefore approximate the expectation in \eqref{eq:ao_arm} using Monte Carlo, but placing this approximation inside the logarithm yields a biased estimator of the log-likelihood. Thus, in previous works \citep{uria2014deep,hoogeboom2022autoregressive}, log-likelihood estimation is typically based on ELBO:
\begin{equation}
    \log p_{\text{AO-ARM}}(x)
    = \log \mathbb{E}_{\pi \sim p(\pi)} \left[ \prod_{t=1}^{L} p(x^{\pi_t} | x^{\pi_{<t}}) \right] \geq \mathbb{E}_{\pi \sim p(\pi)} \left[ \sum_{t=1}^{L} \log p(x^{\pi_t} | x^{\pi_{<t}}) \right].
    \label{eq:ao_arm_elbo}
\end{equation}
This latent-order perspective also provides a natural bridge to discrete diffusion models.

\subsection{Masked diffusion models (MDM)}
\label{sec:mdm}
In MDMs \citep{austin2021structured,shi2024simplified,sahoo2024mdlm}, the latent variable can be interpreted as an unmasking trajectory, or equivalently as a grouped ordering; see Figure~\ref{fig:ao_ar_vs_md}. We write an unmasking trajectory as $\pi=(\pi_1,\dots,\pi_T)\in\mathcal{G}$, where $\mathcal{G}$ denotes the set of \emph{grouped orderings}. At each generation step $t\in\{1,\dots,T\}$, the set $\pi_t\subseteq\{1,\dots,L\}$ contains the positions revealed at that step. The sets $\pi_t$ are pairwise disjoint, may be empty, and cover all positions, i.e., $\bigsqcup_{t=1}^{T}\pi_t=\{1,\dots,L\}$. Thus, a step may reveal zero, one, or multiple positions, enabling parallel token generation. To make such parallel updates tractable, the joint distribution over the revealed tokens is typically factorized over positions, $p(x^{\pi_t} | x^{\pi_{<t}}, \pi) = \prod_{d\in \pi_t}p(x^d | x^{\pi_{<t}}, \pi)$. With this parameterization, MDMs mirror AO-ARMs by averaging over grouped generation orderings:
\begin{equation}
    p_{\mathrm{MDM}}(x)
    =
    \mathbb{E}_{\pi \sim p(\pi)}
    \left[
        p(x | \pi)
    \right]
    =
    \mathbb{E}_{\pi\sim p(\pi)}
    \left[
        \prod_{t=1}^{T}
        \prod_{d\in \pi_t}
        p(x^d | x^{\pi_{<t}}, \pi)
    \right],
    \label{eq:mdm}
\end{equation}
where $p(\pi)$ is the distribution over unmasking trajectories induced by the chosen masking process. Generation proceeds in $T$ steps, which may be smaller or larger than $L$.

Notably, the grouped-ordering view also clarifies the relation to AO-ARMs in continuous time. In models such as \citep{sahoo2024mdlm}, each position has a continuous generation time $t\in[0,T]$, so different positions are revealed at the same time with probability zero. After sorting the reveal events and removing empty intervals, the trajectory reduces to single-position groups. As a result, generation proceeds in an AO-ARM-like manner, revealing one token at a time according to a random ordering $\pi\in\mathcal{S}$ induced by the masking process.

Regarding log-likelihood estimation, in continuous time one can use the standard MDM ELBO, e.g., \citep[Eq.~2.6]{ou2025absorbing}, which is notably equivalent to the AO-ARM ELBO in \eqref{eq:ao_arm_elbo} \citep[Thm.~2]{ou2025absorbing}. For discretized-time MDMs, however, the relation to AO-ARMs yields a similar ELBO, but over grouped trajectories $\pi\in\mathcal{G}$:
\begin{equation}
    \log p_{\mathrm{MDM}}(x)
    \!=
    \log
    \mathbb{E}_{\pi\sim p(\pi)}\!
    \left[
        \prod_{t=1}^{T}
        \prod_{d\in \pi_t}\!
        p(x^d | x^{\pi_{<t}}\!, \pi)
    \right]
    \!\geq
    \mathbb{E}_{\pi\sim p(\pi)}\!
    \left[
        \sum_{t=1}^{T}
        \sum_{d\in \pi_t}
        \log p(x^d | x^{\pi_{<t}}\!, \pi)
    \right]\!.
    \label{eq:mdm_elbo}
\end{equation}

\subsection{Block models (BM)}
\label{sec:bm}
BMs \citep{arriola2025block} provide another way to combine the tractability of ARMs with the flexibility of AO-ARMs and MDMs. Rather than defining a single ordering over all positions, they split the sequence into $B$ blocks of size $L'$, denoted by disjoint position sets $\mathcal{B}_1,\dots,\mathcal{B}_B$ with $\bigsqcup_{b=1}^{B}\mathcal{B}_b=\{1,\dots,L\}$. This gives the following block-level decomposition:
\begin{equation}
    p_{\mathrm{BM}}(x)
    =
    \prod_{b=1}^{B}
    p(x^{\mathcal{B}_b} | x^{\mathcal{B}_{<b}}),
    \qquad \text{where }
    p(x^{\mathcal{B}_b} | x^{\mathcal{B}_{<b}})
    \text{ is parameterized as in } \eqref{eq:ao_arm} \text{ or } \eqref{eq:mdm}.
    \label{eq:block_decomp}
\end{equation}
As a result, the block factorization preserves the autoregressive structure needed for techniques such as KV caching, while the within-block conditionals can still use random-order or parallel generation.

In both parameterizations, the decomposition in \eqref{eq:block_decomp} is restricted to positions inside $\mathcal{B}^b$ and conditioned on the previously generated blocks $x^{\mathcal{B}_{<b}}$. Thus, log-likelihood evaluation reduces to a sum of block conditional log-likelihoods. However, the AO-ARM formulation in \eqref{eq:ao_arm} and the MDM formulation in \eqref{eq:mdm} still require averaging within each block over local orderings $\pi\in\mathcal{S}$ or grouped orderings $\pi\in\mathcal{G}$, making exact evaluation intractable and requiring the ELBOs from \eqref{eq:ao_arm_elbo} and \eqref{eq:mdm_elbo}, respectively.

\subsection{Likelihood evaluation challenges}
\label{sec:problematics}

Likelihood-based comparison of ARMs, AO-ARMs, and MDMs involves two separate choices: the trained model $p_{\text{model}}$ and the estimator used to evaluate its likelihood. For the model, notably, a single network trained with either an AO-ARM or MDM ELBO can be reused under different generation orderings. This is possible because these formulations parameterize the same family of conditional distributions, $p_{\text{model}}(x^d \mid x^{\pi_{<t}}, \pi_{<t})$ for $d\in\pi_t$, with ARMs recovered as the fixed-order special case. Moreover, their training objectives have equivalent minimizers under the connection established by \citep{ou2025absorbing}. For estimator, on the other hand, the ELBO in \eqref{eq:ao_arm_elbo} and \eqref{eq:mdm_elbo} do not localize the true log-likelihood of AO-ARMs and MDMs, and therefore cannot support rigorous comparison with ARMs on their own. We therefore introduce a complementary upper bound that, together with the ELBO, localizes the log-likelihood of AO-ARMs and MDMs.

\section{Method}
\label{sec:method}
We construct our TUBE, a variational upper bound on $\log p(x)$ that, combined with ELBO, localizes the likelihood from both sides. We first state the bound itself (\S\ref{sec:tube_agnostic}). Then take a look at its practical implementation for AO-ARM and MDM models in (\S\ref{sec:methods_tube_for_aoarm_mdm}).



\subsection{Tangent Upper Bound on Evidence (TUBE)}
\label{sec:tube_agnostic}

Our method of constructing the upper bound on $\log p(x)$ rests on the concavity of $\log a$ function in $a$ and the tangent majorization of $\log a$ by the taylor decomposition at point $b$:
\begin{equation}
    \log a \leq \log b + \frac{a - b}{b}, \quad \forall\, a, b > 0.
\end{equation}
The upper bound has $b$ as the variational variable and allows us to construct a linear bound for $\log p(x)$ treating the upper bound at each $a=p(x)$ separately.


\begin{tcolorbox}[
  colback=gray!8,
  colframe=gray!60!black,
  boxrule=0.6pt,
  arc=2pt,
  left=6pt,
  right=6pt,
  top=4pt,
  bottom=4pt
]
\begin{definition}[Tangent Upper Bound on Evidence]
\label{thm:tube}
Consider positive functions $p(x)$ and $\psi(x)$, where $p(x), \psi(x) > 0$ for every $x\in\mathcal{X}$. The following variational bound holds:
\begin{equation}
\log p(x)\;\le\;\mathrm{TUBE}_{\psi}(x)
\;:=\;\log\psi(x)\;+\;\frac{p(x)-\psi(x)}{\psi(x)},
\label{eq:tube}
\end{equation}
where the function $\psi(x)$ is a auxiliary function, and equality holds iff $\psi(x)=p(x)$.
\end{definition}
\end{tcolorbox}


This bound being based on the logarithm linearization allows for linearization of $p(x)$ term, which in context of latent variable models of type $p(x) = \mathbb{E}_\pi[ p(x|\pi)]$, see \S\ref{sec:background},  allows for unbiased Monte Carlo estimation of the bound, see \S\ref{sec:methods_tube_for_aoarm_mdm} below. The introduced function $\psi(x)$, which we call \textbf{surrogate}, can be any positive function and the tightness of TUBE depends on the closeness of $\psi(x)$ to $p(x)$.

Since TUBE is an upper bound, combining it with any lower bound, such as
ELBO or its multi-sample extension $\mathrm{ELBO}_K$, yields a population two-sided
localization of $\log p(x)$ for any $\psi$: 

\begin{tcolorbox}[
  enhanced,
  colback=white,
  colframe=black,
  boxrule=0.6pt,
  arc=4pt,
  left=8pt,
  right=8pt,
  top=12pt,
  bottom=8pt,
  attach boxed title to top left={
    xshift=10pt,
    yshift=-8pt
  },
  boxed title style={
    enhanced,
    colback=white,
    colframe=black,
    boxrule=0.6pt,
    arc=3pt,
    left=4pt,
    right=4pt,
    top=2pt,
    bottom=2pt,
  },
  title={\textbf{Two-sided localization of log-likelihood}},
  fonttitle=\normalsize\bfseries,
  coltitle=black
]
\begin{equation}
\mathrm{ELBO}(x)\;\le\;\log p(x)\;\le\;\mathrm{TUBE}_{\psi}(x).
\label{eq:sandwich_pop}
\end{equation}
\end{tcolorbox}


\subsection{TUBE estimation for AO-ARM and MDM}\label{sec:methods_tube_for_aoarm_mdm}

\textbf{\underline{Monte Carlo estimation.}} While TUBE can applied to any latent variable model $p(x) =\mathbb{E}_z[p(x|z)]$, such as VAE \cite{kingma2013autoencoding} or DDPM \cite{ho2020denoising}, our work focuses on the application of TUBE for AO-ARM \cite{hoogeboom2022autoregressive} and MDM \cite{sahoo2024mdlm} models. Let us consider the model $p_{\rm model}=\mathbb{E}_\pi[ p_{\rm model}(x|\pi)]$ and its Monte Carlo approximated likelihood, then we can finally construct an \textbf{unbiased estimator} for the TUBE$_\psi$:

\begin{equation}
    \widehat{\mathrm{TUBE}_{\psi,K}}(x)\!:=\!\log\psi(x)+\frac{\widehat{p}_{\mathrm{model},K}(x)-\psi(x)}{\psi(x)},\quad \widehat{p}_{\mathrm{model},K}(x):=\frac{1}{K}\sum_{k=1}^{K} p_{\rm model}(x| \pi^{(k)}),
    \label{eq:tube_mc}
\end{equation}
where $\pi^{(1)},\dots,\pi^{(K)}\overset{\mathrm{i.i.d.}}{\sim}p(\pi)$. This straightforward possibility to construct the unbiased estimator is one of the core properties of our bound. This is a contrast with other upper bounds like CUBO \cite{dieng2017chivi} or TVO \cite{masrani2019tvo} where the estimation is biased, see Table~\ref{tab:estimator_zoo} and discussion in \S\ref{sec:related}. In detail, for AO-ARM case the Monte Carlo $\widehat{p}_{\mathrm{model},K}(x)$ takes the form: 
\begin{equation}
    \widehat{p}_{\mathrm{model},K}(x) =\frac{1}{K} \sum_{k=1}^K \left[ \prod_{t=1}^{L} p_{\rm model}(x^{\pi^{(k)}_t} | x^{\pi_{<t}}) \right], \quad \pi^{(1)},\dots,\pi^{(K)}\overset{\mathrm{i.i.d.}} \sim p(\pi), \quad \pi^{(k)} \in \mathcal{S}
\end{equation}
and in the case of MDM with $T$ steps:
\begin{equation}
    \widehat{p}_{\mathrm{model},K}(x) = \frac{1}{K}\sum_{k=1}^K
    \left[
        \prod_{t=1}^{T}
        \prod_{d\in \pi_t}
        p_{\rm model}(x^d | x^{\pi_{<t}}, \pi^{(k)})
    \right], \quad \pi^{(1)},\dots,\pi^{(K)}\overset{\mathrm{i.i.d.}} \sim p(\pi), \pi^{(k)} \in \mathcal{G}.
\end{equation}
The Block Models extensions simply follows from \eqref{eq:block_decomp}. 



\textbf{\underline{Choice of surrogate $\psi$.}}
\label{sec:choice_surrogate_method}
Since the tightness of TUBE depends on the closeness of surrogate $\psi(x)$ to the $p(x)$, the choice of it is important. Here we propose two versions of its construction.

\textbf{(1) Exact likelihood surrogate model.} One option is to take the model which has the similar density to $p_{\rm model}(x)$, but allows for fast and exact likelihood computation, e.g., ARM. Since $p_{\rm model}$ is trained on data $p_{\rm data}$, than it is natural to take ARM trained on the same data ($p_{\rm ARM}\approx p_{\rm data}\approx p_{\rm model}$):
\begin{equation}
    \psi_{ARM}(x) = p_{\text{ARM}}(x)
    = \prod_{t=1}^{L}  p_\psi(x^{\pi_t} | x^{\pi_{<t}}).
\end{equation}
Furthermore, to bring the surrogate $\psi_{ARM}$ model closer to the $p_{\rm model}$ one may finetune the $\psi_{ARM}$ using the synthetic data generated by $p_{\rm model}$, which can be done be regular likelihood maximization procedure \cite{radford2018improving} or more complex procedures \cite{ouyang2022instructgpt}.

\textbf{(2) Self-surrogate.} Another option, is to suppose that conditional on arbitrary order $\pi$ likelihood $p_{\rm model}(x|\pi)$ is close to full model likelihood, i.e., $p_{\rm model}(x|\pi) \approx p_{\rm model}(x)$, which is in fact true at the model optimum \cite{ou2025absorbing}. Furthermore, we can take several orders $\pi^{(m)}$ and average $p_{\rm model}(x|\pi^{(m)})$ for them, which on practice should be even closer to the $p_{\rm model}(x)$. We consider:
\begin{equation}\label{eq:psi_mc}
   \psi_M(x) = \frac{1}{M}\sum_{m=1}^M  p_{\rm model}(x|\hat{\pi}^{(m)})
\end{equation}
where $\hat{\pi}^{(1)},\dots,\hat{\pi}^{(M)}$ are some chosen orders. Note that both $\widehat{p}_{\mathrm{model},K}(x) $ and $\psi_M(x)$ utilize orders $\pi^{(k)}$ and $\hat{\pi}^{(m)}$. If one samples the orders for $\psi_{M}$ at random, e.g., from $p(\pi)$, they should be \textit{independent} from orders for $\widehat{p}_{\mathrm{model},K}(x)$ to avoid correlation bias in $\mathrm{TUBE}_{\psi,K}(x)$.



\paragraph{The computation of TUBE.}
The practical procedure for two-sided likelihood localization consists of sampling latent variables $\pi^{(k)}$, evaluating the target model $p_{\rm model}(x \mid \pi^{(k)})$, computing the surrogate $\psi(x)$, and then evaluating~\eqref{eq:tube_mc}. One can see the \underline{detailed algorithm} in Appendix~\ref{app:add_method}. The computational complexity is then determined by two components: \textit{(1)} evaluation of the likelihood model $p_{\rm model}$ and \textit{(2)} evaluation of the surrogate $\psi$. Estimating complexity of $\widehat{p}_{\text{model},K}$ scales linearly with the number of Monte Carlo samples $K$, i.e., $\mathcal{O}(K)$, since it requires $K$ evaluations of full $p_{\rm model}(x \mid \pi^{(k)})$. The cost of $\psi$ depends on its form: for $\psi_{\rm ARM}$, it is a single forward pass through the AR model, i.e., $\mathcal{O}(1)$, while for $\psi_M(x)$ the cost is similar to $\widehat{p}_{\text{model},K}$ and linear in $M$, i.e., $\mathcal{O}(M)$.

\section{Related work}
\label{sec:related}
\textbf{External-evaluator metrics}. A parallel line of work bypasses $\log p_{\mathrm{model}}(x)$ and scores samples under a separate pretrained reference model. \emph{Generative perplexity (gen-PPL)}~\citep{lou2024sedd,sahoo2024mdlm} reports the perplexity of samples under GPT-2 Large. However, it can be lowered simply by reducing sample diversity~\citep{zheng2024masked}, motivating the de facto practice of reporting it alongside sample entropy which complicated comparisons. \emph{MAUVE}~\citep{pillutla2021mauve} replaces gen-PPL's token-level scoring with an embedding-space divergence and inherits the encoder's coverage biases. \citep{pynadath2026genfrontiers} formalizes the perplexity-entropy pairing via a KL decomposition and proposes a frontier-based comparison, while \citep{salimans2026dmmd} introduces a gradient-based alternative to gen-PPL that vanishes when samples match the training distribution. All of these score the model's samples through a chosen reference rather than evaluating $\log p_{\mathrm{model}}(x)$ on real data, so they cannot be compared to ARM's exact log-likelihood on the same axis.

\paragraph{Log-likelihood approximation for MDMs.}
Evaluating $\log p_{\mathrm{model}}(x)$ for a pretrained MDM is intractable, so most works report single-sample ELBO~\citep{sahoo2024mdlm,haxholli2025perplexity} or its multi-sample tightening \textbf{ELBO}$_K$, also known as I-WAE ~\citep{burda2016iwae}, see Table \ref{tab:estimator_zoo}. Two recent works go further: DUEL~\citep{turok2026duel} computes $\log p_{\mathrm{model}}(x)$ exactly, but only under deterministic unmasking, restricting the sampler family, while SPG~\citep{wang2026spg} introduces a sandwich estimate of $\log p_{\mathrm{model}}(x)$ between ELBO and R\'enyi upper bound~\citep{li2016renyi}.

\paragraph{Upper bound estimators on log-likelihood.}
Here we highlight three established families of generic upper-bound estimators on $\log p_{\mathrm{model}}(x)$ for latent variable models that are applicable to MDMs and most related to our work: \textbf{CUBO$_\beta$, TVO$_L$, IS-VG-B}, see  Table~\ref{tab:estimator_zoo} and Appendix~\ref{app:estimator_comparison} for \underline{summary}. 
\begin{itemize}[leftmargin=*]
    \item \underline{\textbf{CUBO$_\beta$ / R\'enyi}~\citep{li2016renyi,dieng2017chivi,wang2026spg}} take the form $\tfrac{1}{\beta}\log\mathbb{E}_\pi[p_{\mathrm{model}}(x|\pi)^\beta]$ for $\beta\!\ge\!1$, an upper bound on $\log p_{\mathrm{model}}(x)$; exact at $\beta\!=\!1$. Its empirical estimator (see  Table~\ref{tab:estimator_zoo}) applies an outer $\log$ to a Monte Carlo mean, so Jensen acts \textit{against} the upper bound at finite $K$, adding bias which may remove the upper bound guarantee for expectation.
    \item
\underline{\textbf{TVO}~\citep{masrani2019tvo}} is a right Riemann sum of the thermodynamic identity along the geometric path $q_\beta(\pi|x)\propto p_{\mathrm{model}}(x|\pi)^\beta$, discretized at $\Lambda$ grid points $\beta_\lambda = \lambda/\Lambda$, with self-normalized weights biased at finite $K$ and an $\mathcal{O}(1/\Lambda)$ discretization residual (Table~\ref{tab:estimator_zoo}, Appendix~\ref{app:tvo}).
\item \underline{\textbf{IS-VG-B}~\citep{struski2023bounding}} adds an independent log-ratio correction to $\mathrm{ELBO}_s$ governed by two hyperparameters ($n_p \geq 2$ pairs of $s$-sample MC estimators, total budget $K = 2\,s\,n_p$). The bound is rigorous in the population, but the empirical estimator is biased and not bound-preserving, since both log terms apply Jensen-on-log to MC averages (see Table~\ref{tab:estimator_zoo} and Appendix~\ref{app:isvgb}).
\end{itemize}
    
In comparison to all these bounds, our TUBE admits unbiased bound-preserving estimator.

\begin{table}[htb]
\centering
\small
\caption{Estimators for various bounds on log-likelihood for latent-variable models. We say that the empirical estimator is \textbf{bound-preserving} if its \textit{expectation} provably preserves the bound direction. Bound-breaking terms are highlighted in red; our TUBE's estimator is bound-preserving.}
\label{tab:estimator_zoo}
\renewcommand{\arraystretch}{1.1}
\setlength{\tabcolsep}{5pt}
\resizebox{\textwidth}{!}{%
\begin{tabular}{@{} l c c c @{}}
\toprule
\textbf{Bound name} & \textbf{Bound type} & \textbf{Empirical estimator} & \textbf{Bound-preserving?} \\
\midrule
$\mathrm{ELBO}$
  & Lower
  & $\log p_{\mathrm{model}}(x| \pi^{(1)})$
  & \cellcolor{goodbg}\textcolor{goodfg}{\cmark} \\
$\mathrm{ELBO}_K$
  & Lower
  & $\log\!\left(\tfrac{1}{K}\sum_{k=1}^{K}p_{\mathrm{model}}(x| \pi^{(k)})\right)$
  & \cellcolor{goodbg}\textcolor{goodfg}{\cmark} \\
\midrule
CUBO$_{\beta\geq1}$ 
  & Upper
  & $\tfrac{1}{\beta}\,\badterm{\log}\!\left(\tfrac{1}{K}\sum_{k=1}^{K}p_{\mathrm{model}}(x| \pi^{(k)})^{\beta}\right)$
  & \cellcolor{badbg}\textcolor{badfg}{\xmark} \\
TVO$^{U}_{\Lambda}$
  & Upper
  & $\tfrac{1}{\Lambda}\sum_{\lambda=1}^{\Lambda}\sum_{k=1}^{K}\badterm{\bar{w}_{k}^{(\beta_{\lambda})}}\,\log p_{\mathrm{model}}(x| \pi^{(k)})$
  & \cellcolor{badbg}\textcolor{badfg}{\xmark} \\
IS-VG-B
  & Upper
  & $\tfrac{1}{n_p}\sum_{j=1}^{n_p}\badterm{\log}\!\left(\tfrac{1}{s}\sum_{i=1}^{s} p_{\mathrm{model}}(x|\pi^{(j,i)})\right) + \badterm{\log}\!\left(\tfrac{1}{n_p}\sum_{j=1}^{n_p}\dfrac{\sum_{i=1}^{s} p_{\mathrm{model}}(x|\widetilde\pi^{(j,i)})}{\sum_{i=1}^{s} p_{\mathrm{model}}(x|\pi^{(j,i)})}\right)$
  & \cellcolor{badbg}\textcolor{badfg}{\xmark} \\
\midrule
\cellcolor{goodbg}\textbf{TUBE (Ours)}
  & \cellcolor{goodbg} Upper
  & \cellcolor{goodbg} $\log\psi(x)+\goodterm{\dfrac{\tfrac{1}{K}\sum_{k=1}^{K} p_{\mathrm{model}}(x| \pi^{(k)}) - \psi(x)}{\psi(x)}}$
  & \cellcolor{goodbg}\textcolor{goodfg}{\cmark} \\
\bottomrule
\end{tabular}%
}
\end{table}

\section{Experiments}
\label{sec:experiments}
We demonstrate our TUBE through two categories of experiments. First, we show that TUBE is tight enough and helps to compare AO-ARM/MDM with ARMs (\S\ref{sec:owt_likelihood_comparison}). Second, we ablate the tightness of TUBE with respect to the choice of surrogate $\psi$ (\S\ref{sec:tightness}). The code is available in the supplementary.

All evaluations are done with BMs \citep{arriola2025block} at block sizes $L'\in{4,8,16}$ on OWT and LM1B \citep{Gokaslan2019OpenWeb,chelba2014billionwordbenchmarkmeasuring}. Following standard practice for MDMs (\S\ref{sec:mdm}), we report results in terms of perplexity $\mathrm{PPL}=\exp(-\log p_{\rm model})$ rather than log-likelihood directly. \textit{In this form, upper bounds for log-likelihood result in lower bounds for PPL, and vice-versa, lower log-likelihood bounds produce upper bounds for PPL.} For OWT, we use the available pretrained checkpoints. For LM1B, we train the BM from scratch using the original code, since checkpoints are unavailable. The conclusions are similar on both datasets, so we report only OWT results in the main text and defer \underline{LM1B results} to Appendix~\ref{app:replication_lm1b}. \underline{The experimental details} provided in Appendix~\ref{app:reproduction}.

\subsection{Likelihood comparison}
\label{sec:owt_likelihood_comparison}

\begin{table*}[t]
\centering
\small
\caption{\textbf{Test perplexity (\(\downarrow\)) on OWT.} We report $\mathrm{PPL}=\exp(-\log p_{\mathrm{model}}(x))$. The $\pm$ values denote standard deviations over $10$ random subsets of orderings, with all estimators in a row computed using the same total number of sampled orderings. The reference column reports $\mathrm{ELBO}_K$, which is exact for $L'=4$ (marked with $^{\star}$) and Monte Carlo-estimated for $L'\in\{8,16\}$. \textbf{Color coding:} coral background marks biased estimators, and red text marks cells where the value exceeds $\mathrm{ELBO}_K$, violating the expected bound relation.}
\label{tab:owt_ppl_main}
\renewcommand{\arraystretch}{1.15}
\setlength{\tabcolsep}{4.5pt}
\resizebox{\textwidth}{!}{%
\begin{tabular}{
@{} c l
>{\columncolor{plugcoral}}c
>{\columncolor{plugcoral}}c
>{\columncolor{plugcoral}}c
>{\columncolor{tubegreen}}c
>{\columncolor{refgray}}c
>{\columncolor{elbolavender}}c
>{\columncolor{gapamber}}c
@{}}
\toprule
& &
\multicolumn{3}{c}{\cellcolor{plugcoraldeep}\textbf{Biased upper estimators for log-likelihood}} &
\multicolumn{1}{c}{\cellcolor{tubedeep}\textcolor{white}{\textbf{Ours}}} &
\multicolumn{2}{c}{\textbf{References}} &
\multicolumn{1}{c}{\textbf{Gap}} \\
\cmidrule(lr){3-5}
\cmidrule(lr){6-6}
\cmidrule(lr){7-8}
\cmidrule(lr){9-9}
\textbf{$L'$} & \textbf{Regime}
& \textbf{CUBO$_{\beta=2}$}
& \textbf{TVO$_U$}
& \textbf{IS-VG-B}
& \textbf{TUBE}
& \textbf{ELBO$_K$}
& \textbf{ELBO}
& \textbf{|TUBE$-$ARM|} \\
\midrule
\rowcolor{baselinecream}
\multicolumn{9}{l}{\textit{Exact ARM baseline:} \(\textbf{17.54}\) PPL} \\
\midrule
\multirow{4}{*}{4}
& NFE$=1$
    & \(97.78\) & \(97.78\) & \(97.78\) & \(97.78\) & \(97.78\) & \(-\) & \textcolor{darkamber}{\(80.24\)} \\
& NFE$=2$
    & \(24.34 \pm 0.01\) & \textcolor{violationred}{\(27.68 \pm 0.01\)} & \textcolor{violationred}{\(29.11 \pm 0.03\)} & \(\geq 21.55 \pm 0.03\) & \(\leq 27.40\) & \(-\) & \textcolor{darkamber}{\(4.01\)} \\
& NFE$=4$
    & \(19.78 \pm 0.01\) & \textcolor{violationred}{\(21.89 \pm 0.01\)} & \textcolor{violationred}{\(22.52 \pm 0.01\)} & \(\geq 19.16 \pm 0.02\) & \(\leq 21.63\) & \(-\) & \textcolor{darkamber}{\(1.62\)} \\
& AO-ARM
    & \(17.50 \pm 0.00\) & \textcolor{violationred}{\(18.55 \pm 0.00\)} & \textcolor{violationred}{\(18.76 \pm 0.01\)} & \(\geq 17.74 \pm 0.01\) & \(=18.46^{\star}\) & \(\leq 21.36 \pm 0.21\) & \textcolor{darkamber}{\(0.20\)} \\
\midrule
\multirow{5}{*}{8}
& NFE$=1$
    & \(221.54\) & \(221.54\) & \(221.54\) & \(221.54\) & \(221.54\) & \(-\) & \textcolor{darkamber}{\(204.00\)} \\
& NFE$=2$
    & \(29.04 \pm 0.01\) & \textcolor{violationred}{\(33.41 \pm 0.02\)} & \textcolor{violationred}{\(34.95 \pm 0.03\)} & \(\geq 24.32 \pm 0.06\) & \(\leq 32.63\) & \(-\) & \textcolor{darkamber}{\(6.78\)} \\
& NFE$=4$
    & \(20.90 \pm 0.01\) & \textcolor{violationred}{\(23.54 \pm 0.01\)} & \textcolor{violationred}{\(24.15 \pm 0.01\)} & \(\geq 20.23 \pm 0.03\) & \(\leq 23.25\) & \(-\) & \textcolor{darkamber}{\(2.69\)} \\
& NFE$=8$
    & \(18.85 \pm 0.01\) & \textcolor{violationred}{\(20.83 \pm 0.01\)} & \textcolor{violationred}{\(21.17 \pm 0.01\)} & \(\geq 19.15 \pm 0.02\) & \(\leq 20.67\) & \(-\) & \textcolor{darkamber}{\(1.61\)} \\
& AO-ARM
    & \(17.95 \pm 0.00\) & \textcolor{violationred}{\(19.31 \pm 0.00\)} & \textcolor{violationred}{\(19.46 \pm 0.00\)} & \(\geq 18.67 \pm 0.01\) & \(\leq 19.24\) & \(\leq 22.18 \pm 0.58\) & \textcolor{darkamber}{\(1.13\)} \\
\midrule
\multirow{6}{*}{16}
& NFE$=1$
    & \(398.63\) & \(398.63\) & \(398.63\) & \(398.63\) & \(398.63\) & \(-\) & \textcolor{darkamber}{\(381.09\)} \\
& NFE$=2$
    & \(35.80 \pm 0.01\) & \textcolor{violationred}{\(39.80 \pm 0.01\)} & \textcolor{violationred}{\(41.93 \pm 0.06\)} & \(\geq 22.50 \pm 0.15\) & \(\leq 38.43\) & \(-\) & \textcolor{darkamber}{\(4.96\)} \\
& NFE$=4$
    & \(22.89 \pm 0.02\) & \textcolor{violationred}{\(25.21 \pm 0.01\)} & \textcolor{violationred}{\(25.96 \pm 0.02\)} & \(\geq 19.03 \pm 0.08\) & \(\leq 24.75\) & \(-\) & \textcolor{darkamber}{\(1.49\)} \\
& NFE$=8$
    & \(19.59 \pm 0.01\) & \textcolor{violationred}{\(21.38 \pm 0.01\)} & \textcolor{violationred}{\(21.77 \pm 0.01\)} & \(\geq 18.44 \pm 0.04\) & \(\leq 21.14\) & \(-\) & \textcolor{darkamber}{\(0.90\)} \\
& NFE$=16$
    & \(18.44 \pm 0.01\) & \textcolor{violationred}{\(19.98 \pm 0.00\)} & \textcolor{violationred}{\(20.26 \pm 0.01\)} & \(\geq 18.22 \pm 0.03\) & \(\leq 19.83\) & \(-\) & \textcolor{darkamber}{\(0.68\)} \\
& AO-ARM
    & \(18.08 \pm 0.00\) & \textcolor{violationred}{\(19.39 \pm 0.00\)} & \textcolor{violationred}{\(19.56 \pm 0.01\)} & \(\geq 18.47 \pm 0.01\) & \(\leq 19.30\) & \(\leq 22.81 \pm 0.61\) & \textcolor{darkamber}{\(0.93\)} \\
\bottomrule
\end{tabular}%
}
\end{table*}

Here we ask two questions: is the true log-likelihood of AO-ARM/MDM competitive with an ARM baseline, and are existing lower-side estimators on log-likelihood tight enough to settle the comparison? We address both by pairing ELBO$_K$ with TUBE and benchmarking against prior estimators on a shared set of Monte Carlo samples.

\paragraph{Setup.} We compare TUBE against $\mathrm{CUBO}_{\beta=2}$, $\mathrm{TVO}_U$, IS-VG-B, and also report ELBO and $\mathrm{ELBO}_K$. The same pretrained BM is evaluated under two regimes: AO-ARM ($\pi\in\mathcal{S}$) and MDM ($\pi\in\mathcal{G}$, $\mathrm{NFE}\in\{1,2,4,\dots\}$). For $L'=4$ in AO-ARM we enumerate all $4!=24$ orderings, which gives the \textbf{exact} per-block log-likelihood and we report it instead of $\mathrm{ELBO}_K$. For $L'\in\{8,16\}$ this is intractable, so we sample $64$ and $128$ orderings respectively. Then, split equally into two independent sets used for $\widehat{p}_{\mathrm{model},K}$ and $\psi_M$, with $\mathrm{ELBO}_K$ reported as a Monte Carlo estimate.

\paragraph{Results.} For MDMs, Table~\ref{tab:owt_ppl_main} shows that TUBE is generally the tightest lower bound on PPL across all NFEs ($=T$ in \S\ref{sec:mdm}), compared with the competing estimators, with $\mathrm{CUBO}_{\beta=2}$ being the only exception. However, we emphasize that $\mathrm{CUBO}_{\beta=2}$ is  not reliable, as discussed in the next paragraph. When comparing MDMs with the ARM baseline, the gap in the rightmost column, decreases from about $5$ to $1$ PPL as NFE increases. For AO-ARMs, TUBE is the tightest estimator across the reported settings. The conclusions are the clearest for $L'=4$, where $\mathrm{ELBO}_K$ is exact, so the likelihood gap between AO-ARMs/MDMs and ARMs is localized without Monte Carlo ambiguity.

\begin{tcolorbox}[
  colback=white,
  colframe=gray!60!black,
  boxrule=0.6pt,
  arc=2pt,
  left=6pt,
  right=6pt,
  top=4pt,
  bottom=4pt
]
\textbf{Takeaway.} In every configuration the AO-ARM/MDM PPL lies strictly above the ARM, so ARM is dominant in likelihood across all block sizes and generation regimes we consider.
\end{tcolorbox}



The other estimators behave differently. $\mathrm{TVO}_U$ and IS-VG-B sit \emph{above} $\mathrm{ELBO}_K$ in every regime with $\mathrm{NFE}\geq 2$, \textbf{so they fail to give a reliable bound}. This is structural: each applies a nonlinear function after Monte Carlo averaging, so the population bound does not transfer to its empirical estimator. TUBE avoids this by being affine in $\widehat{p}_{\mathrm{model},K}$, so the bound from \eqref{eq:tube} transfers to the Monte Carlo estimator at every $K\geq 1$. $\mathrm{CUBO}_{\beta=2}$ behaves differently from $\mathrm{TVO}_U$ and IS-VG-B: in our data it sits below $\mathrm{ELBO}_K$ in every row. Nevertheless, this does not make $\mathrm{CUBO}_{\beta=2}$ a valid upper bound: as discussed in \S\ref{sec:related}, its empirical estimator applies $\log$ to a Monte Carlo mean and is therefore biased downward. Furthermore, the choice of $\beta$ can substantially \underline{change the estimator bias} and even lead to values exceeding $\mathrm{ELBO}_K$, see Appendix~\ref{app:cubo}. This makes CUBO \textbf{unreliable} as well.

\subsection{Surrogate $\psi$ choice and finite-sample behavior}
\label{sec:tightness}
By \eqref{eq:tube}, the tightness of TUBE depends on two factors: the number of orderings used to estimate $\widehat{p}_{\mathrm{model},K}$, and the surrogate $\psi$ used to construct the bound. While the Monte Carlo error in estimating $p_{\mathrm{model}}$ decreases as $1/\sqrt{K}$, the role of $\psi$ is less straightforward. So we ask which construction from \S\ref{sec:choice_surrogate_method} closes gap between $\mathrm{TUBE}_\psi$ and $\log p_{\mathrm{model}}(x)$ most effectively. 

\begin{figure}[t]
    \centering
    \noindent\makebox[\textwidth][c]{%
        \hspace*{-0.33cm}%
        \includegraphics[width=1.06\textwidth]{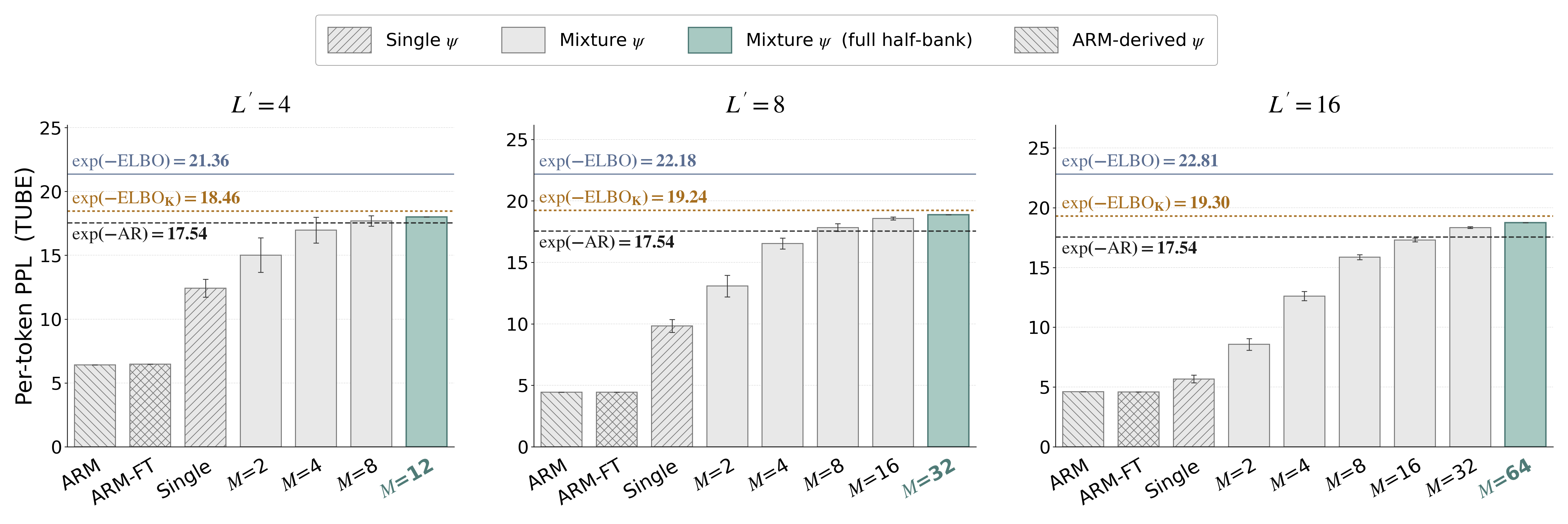}%
    }
    \caption{\textbf{Test perplexity ($\downarrow$) with different $\psi$ choices on OWT.} We report $\mathrm{PPL}=\exp(-\log p_{\mathrm{model}}(x))$. The bar plot reports standard deviations over 10 repeated ordering samples with different random seeds. The dotted gold reference line shows $\mathrm{ELBO}_K$.}
    \label{fig:psi_choice}
\end{figure}


\paragraph{Setup.}
With $\widehat{p}_{\mathrm{model},K}$ fixed, we compare four self-surrogates: a single-order surrogate $\psi_{\pi}$, the self-surrogate $\psi_M$, an external ARM surrogate $\psi_{\mathrm{ARM}}$, and a variant of ARM fine-tuned on samples generated by the BM (Appendix~\ref{app:reproduction}). We vary the number of sampled orderings $M$ on a $\log_2$ grid up to the full ordering budget used in \S\ref{sec:owt_likelihood_comparison}.

\paragraph{Results.} Figure~\ref{fig:psi_choice} shows that the self-surrogate $\psi_M$ gets closest to $\mathrm{ELBO}_K$ at every block size with increase of $M$. The external ARM surrogate $\psi_{\mathrm{ARM}}$ leaves a much wider gap, and fine-tuning it on BM samples shifts the bound only marginally. A single fixed order $\psi_\pi$ is looser still. This matches the discussion of \S\ref{sec:choice_surrogate_method}: $\psi_M$ averages $p_{\mathrm{model}}(x|\pi)$ over independent orderings drawn from the same model and therefore tracks $p_{\mathrm{model}}(x)$ pointwise better than an external density.

\section{Potential impact and limitations}
\label{sec:impact}

\paragraph{Limitations.} Our empirical study is limited to BM~\citep{arriola2025block} which is the dominant inference-time paradigm of large-scale MDMs such as LLaDA~2.0~\citep{bie2025llada2} and Mercury~\citep{khanna2025mercury}. However, applying TUBE to non-block MDM may be challenging. TUBE's tightness requires the surrogate $\psi(x) \approx p_{\mathrm{model}}(x)$. For non-block MDM at typical sequence lengths $L \sim 10^2$ to $10^3$, $p_{\mathrm{model}}(x)$ is approximately product of $L$ per-token probabilities, vanishingly small in absolute scale. Thus, $\psi$  becomes hard to estimate reliably in practice. In turn, BM decomposition \eqref{eq:block_decomp} for moderate block size keeps computing log-likelihood numerically feasible.

\paragraph{Potential impact.}
The reliable two-sided estimation of $\log p_{\mathrm{model}}(x)$ enables several downstream uses. It tightens perplexity bounds for diffusion language models~\citep{sahoo2024mdlm,arriola2025block}, supplies finite-sample policy ratios for RL post-training~\citep{christiano2017rlhf,ouyang2022instructgpt,shao2024deepseekmath,rafailov2023dpo}, and supports likelihood-based evaluation in scientific generative modeling such as protein sequence design~\citep{alamdari2023evodiff,wang2025dplm2}. Studying these directions using our TUBE is a promising avenue for the future work.


\bibliographystyle{plainnat}
\bibliography{references.bib}


\appendix








\newpage 

\section{Experiments on LM1B}
\label{app:replication_lm1b}
This section reports the results of our TUBE evaluation on LM1B dataset (Table \ref{tab:lm1b_ppl_main} and Figure \ref{fig:group_size_lm1b}). The conclusions are analogous to those from the main paper obtained on OWT dataset.

\begin{table*}[!h]
\centering
\small
\caption{\textbf{Test perplexity (\(\downarrow\)) on LM1B.} We report $\mathrm{PPL}=\exp(-\log p_{\mathrm{model}}(x))$. The $\pm$ values denote standard deviations over $10$ random subsets of orderings, with all estimators in a row computed using the same total number of sampled orderings. The reference column reports $\mathrm{ELBO}_K$, which is exact for $L'=4$ (marked with $^{\star}$) and Monte Carlo-estimated for $L'\in\{8,16\}$. \textbf{Color coding:} coral background marks biased estimators, and red text marks cells where the value exceeds $\mathrm{ELBO}_K$, violating the expected bound relation.}
\label{tab:lm1b_ppl_main}
\renewcommand{\arraystretch}{1.15}
\setlength{\tabcolsep}{4.5pt}
\resizebox{\textwidth}{!}{%
\begin{tabular}{
@{} c l
>{\columncolor{plugcoral}}c
>{\columncolor{plugcoral}}c
>{\columncolor{plugcoral}}c
>{\columncolor{tubegreen}}c
>{\columncolor{refgray}}c
>{\columncolor{elbolavender}}c
>{\columncolor{gapamber}}c
@{}}
\toprule
& &
\multicolumn{3}{c}{\cellcolor{plugcoraldeep}\textbf{Biased upper estimators}} &
\multicolumn{1}{c}{\cellcolor{tubedeep}\textcolor{white}{\textbf{Ours}}} &
\multicolumn{2}{c}{\textbf{References}} &
\multicolumn{1}{c}{\textbf{Gap}} \\
\cmidrule(lr){3-5}
\cmidrule(lr){6-6}
\cmidrule(lr){7-8}
\cmidrule(lr){9-9}
\textbf{$L'$} & \textbf{Regime}
& \textbf{CUBO$_{\beta=2}$}
& \textbf{TVO$_U$}
& \textbf{IS-VG-B}
& \textbf{TUBE}
& \textbf{ELBO$_K$}
& \textbf{ELBO}
& \textbf{TUBE$-$ARM} \\
\midrule
\rowcolor{baselinecream}
\multicolumn{9}{l}{\textit{Exact ARM baseline:} \(22.46\) PPL} \\
\midrule
\multirow{4}{*}{4}
& NFE$=1$
    & \(137.09\) & \(137.09\) & \(137.09\) & \(137.09\) & \(137.09\) & \(-\) & \textcolor{darkamber}{\(114.62\)} \\
& NFE$=2$
    & \(34.67 \pm 0.01\) & \textcolor{violationred}{\(39.45 \pm 0.01\)} & \textcolor{violationred}{\(41.40 \pm 0.04\)} & \(\geq 30.87 \pm 0.03\) & \(\leq 39.08\) & \(-\) & \textcolor{darkamber}{\(8.41\)} \\
& NFE$=4$
    & \(28.05 \pm 0.01\) & \textcolor{violationred}{\(31.01 \pm 0.01\)} & \textcolor{violationred}{\(31.85 \pm 0.02\)} & \(\geq 27.34 \pm 0.03\) & \(\leq 30.67\) & \(-\) & \textcolor{darkamber}{\(4.88\)} \\
& AO-ARM
    & \(25.25 \pm 0.00\) & \textcolor{violationred}{\(26.54 \pm 0.00\)} & \textcolor{violationred}{\(26.78 \pm 0.00\)} & \(\geq 25.63 \pm 0.01\) & \(=26.45^{\star}\) & \(\leq 28.95 \pm 0.38\) & \textcolor{darkamber}{\(3.17\)} \\
\midrule
\multirow{5}{*}{8}
& NFE$=1$
    & \(299.88\) & \(299.88\) & \(299.88\) & \(299.88\) & \(299.88\) & \(-\) & \textcolor{darkamber}{\(277.42\)} \\
& NFE$=2$
    & \(41.45 \pm 0.02\) & \textcolor{violationred}{\(47.68 \pm 0.02\)} & \textcolor{violationred}{\(49.73 \pm 0.03\)} & \(\geq 35.24 \pm 0.05\) & \(\leq 46.66\) & \(-\) & \textcolor{darkamber}{\(12.77\)} \\
& NFE$=4$
    & \(29.78 \pm 0.01\) & \textcolor{violationred}{\(33.53 \pm 0.01\)} & \textcolor{violationred}{\(34.34 \pm 0.02\)} & \(\geq 29.11 \pm 0.04\) & \(\leq 33.14\) & \(-\) & \textcolor{darkamber}{\(6.65\)} \\
& NFE$=8$
    & \(26.88 \pm 0.01\) & \textcolor{violationred}{\(29.65 \pm 0.01\)} & \textcolor{violationred}{\(30.09 \pm 0.01\)} & \(\geq 27.51 \pm 0.02\) & \(\leq 29.44\) & \(-\) & \textcolor{darkamber}{\(5.05\)} \\
& AO-ARM
    & \(25.14 \pm 0.00\) & \textcolor{violationred}{\(26.90 \pm 0.00\)} & \textcolor{violationred}{\(27.09 \pm 0.01\)} & \(\geq 26.15 \pm 0.01\) & \(\leq 26.82\) & \(\leq 30.32 \pm 0.38\) & \textcolor{darkamber}{\(3.69\)} \\
\midrule
\multirow{6}{*}{16}
& NFE$=1$
    & \(546.32\) & \(546.32\) & \(546.32\) & \(546.32\) & \(546.32\) & \(-\) & \textcolor{darkamber}{\(523.86\)} \\
& NFE$=2$
    & \(51.26 \pm 0.03\) & \textcolor{violationred}{\(57.02 \pm 0.03\)} & \textcolor{violationred}{\(59.97 \pm 0.03\)} & \(\geq 31.99 \pm 0.13\) & \(\leq 55.13\) & \(-\) & \textcolor{darkamber}{\(9.53\)} \\
& NFE$=4$
    & \(32.81 \pm 0.02\) & \textcolor{violationred}{\(36.15 \pm 0.02\)} & \textcolor{violationred}{\(37.18 \pm 0.02\)} & \(\geq 27.39 \pm 0.06\) & \(\leq 35.52\) & \(-\) & \textcolor{darkamber}{\(4.93\)} \\
& NFE$=8$
    & \(28.00 \pm 0.01\) & \textcolor{violationred}{\(30.54 \pm 0.01\)} & \textcolor{violationred}{\(31.07 \pm 0.01\)} & \(\geq 26.60 \pm 0.04\) & \(\leq 30.23\) & \(-\) & \textcolor{darkamber}{\(4.14\)} \\
& NFE$=16$
    & \(26.34 \pm 0.01\) & \textcolor{violationred}{\(28.50 \pm 0.01\)} & \textcolor{violationred}{\(28.86 \pm 0.01\)} & \(\geq 26.24 \pm 0.02\) & \(\leq 28.30\) & \(-\) & \textcolor{darkamber}{\(3.78\)} \\
& AO-ARM
    & \(25.14 \pm 0.00\) & \textcolor{violationred}{\(26.88 \pm 0.00\)} & \textcolor{violationred}{\(27.09 \pm 0.01\)} & \(\geq 25.82 \pm 0.02\) & \(\leq 26.78\) & \(\leq 31.03 \pm 0.49\) & \textcolor{darkamber}{\(3.36\)} \\
\bottomrule
\end{tabular}%
}
\end{table*}

\begin{figure}[!h]
    \centering
    \noindent\makebox[\textwidth][c]{%
        \hspace*{-0.3cm}%
        \includegraphics[width=1.06\textwidth]{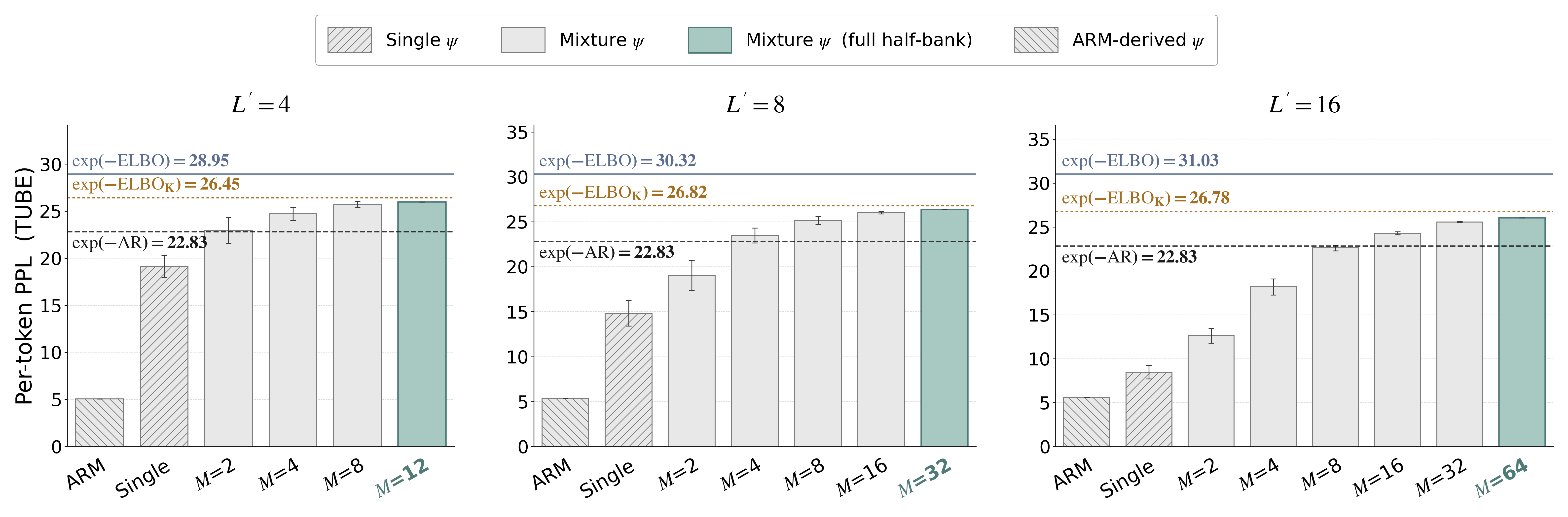}%
    }
    \caption{\textbf{Test perplexity ($\downarrow$) with different $\psi$ choices on LM1B.} We report $\mathrm{PPL}=\exp(-\log p_{\mathrm{model}}(x))$. The bar plot reports standard deviations over 10 repeated ordering samples with different random seeds. The dotted gold reference line shows $\mathrm{ELBO}_K$.}
    \label{fig:group_size_lm1b}
\end{figure}

\section{Additional methodology}

\subsection{TUBE Estimation Algorithm}\label{app:add_method}

Here we present the Algorithm~\ref{alg:tube} for the two-sided likelihood localization via our TUBE$_\psi$ \eqref{eq:tube} and ELBO$_K$ estimate \cite{sahoo2025esoteric} which originates from I-WAE model \cite{burda2016iwae}.


\begin{algorithm}[h]
\caption{TUBE$_\psi$ and ELBO$_K$: two-sided likelihood interval}
\label{alg:tube}
\begin{algorithmic}[1]
\Require sample $x$, surrogate type $c\in\{\mathrm{ARM},\mathrm{self}\}$, sample budgets $K,M\ge 1$, model $p_{\rm model}(\cdot|\pi)$, latent prior $p(\pi)$, optional ARM surrogate $p_{\rm ARM}$
\State Sample $\pi^{(1)},\dots,\pi^{(K)}\overset{\mathrm{i.i.d.}}{\sim}p(\pi)$
\State $\widehat{p}_{\mathrm{model},K}(x)\gets \dfrac{1}{K}\sum_{k=1}^{K} p_{\rm model}(x|\pi^{(k)})$
\State $\widehat{\mathrm{TUBE}}_{\psi,K}(x)\gets \log\psi(x)+\dfrac{\widehat{p}_{\mathrm{model},K}(x)-\psi(x)}{\psi(x)}$
\State $\widehat{\mathrm{ELBO}}_{K}(x)\gets \log \widehat{p}_{\mathrm{model},K}(x)$
\State \Return LL interval $[\,\widehat{\mathrm{ELBO}}_{K}(x),\,\widehat{\mathrm{TUBE}}_{\psi,K}(x)\,]$
\end{algorithmic}
\end{algorithm}

\subsection{Statistical properties}

\begin{proposition}[Unbiasednes and Variance of TUBE]
\label{prop:tube_variance}
Fix $x \in \mathcal X$ and assume that $\psi(x)>0$ is deterministic. Recap the  Monte Carlo estimator of $\mathrm{TUBE}_{\psi}(x)$:
\begin{equation}
    \widehat{\mathrm{TUBE}_{\psi,K}}(x)\!:=\!\log\psi(x)+\frac{\widehat{p}_{\mathrm{model},K}(x)-\psi(x)}{\psi(x)},\quad \widehat{p}_{\mathrm{model},K}(x):=\frac{1}{K}\sum_{k=1}^{K} p_{\rm model}(x| \pi^{(k)}),
\end{equation}
Then, for every $K \ge 1$ and $\pi^{(1)},\ldots,\pi^{(K)}$ which are i.i.d.:
\[
    \mathbb E_{\pi^{(1:K)}}\!
    \left[
        \widehat{\mathrm{TUBE}}_{\psi,K}(x)
    \right]
    =
    \mathrm{TUBE}_{\psi}(x),
\]
Moreover, if
$\operatorname{Var}_{\pi \sim p(\pi)}[p_{\mathrm{model}}(x \mid \pi)] < \infty$, then
\[
    \operatorname{Var}_{\pi^{(1:K)}}\!
    \left[
        \widehat{\mathrm{TUBE}}_{\psi,K}(x)
    \right]
    =
    \frac{1}{K\,\psi(x)^2}
    \operatorname{Var}_{\pi \sim p(\pi)}
    \left[
        p_{\mathrm{model}}(x \mid \pi)
    \right].
\]
\end{proposition}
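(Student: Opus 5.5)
The estimator is an affine function of the sample mean $\widehat{p}_{\mathrm{model},K}(x)$ with deterministic coefficients, so both claims follow from linearity of expectation and the scaling rule for variance. Fix $x$ and write
\[
\widehat{\mathrm{TUBE}}_{\psi,K}(x) = a + c\,\widehat{p}_{\mathrm{model},K}(x),
\qquad a := \log\psi(x) - 1,\quad c := \frac{1}{\psi(x)}.
\]
Because $\psi(x)>0$ is deterministic (in particular, it does not depend on $\pi^{(1:K)}$), $a$ and $c$ are finite constants. This is exactly why the paper requires the orders $\hat\pi^{(m)}$ used in $\psi_M$ to be drawn independently of the $\pi^{(k)}$. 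In that case one conditions on them and applies the argument conditionally.

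For unbiasedness, each term satisfies $\mathbb E_{\pi\sim p(\pi)}[p_{\mathrm{model}}(x\mid\pi)] = p_{\mathrm{model}}(x)$ by the latent-variable representation \eqref{eq:ao_arm} or \eqref{eq:mdm}, and this expectation is finite because $p_{\mathrm{model}}(x\mid\pi)\in[0,1]$ on a discrete space. Linearity then gives $\mathbb E[\widehat{p}_{\mathrm{model},K}(x)] = p_{\mathrm{model}}(x)$ for every $K\ge1$, using only that the $\pi^{(k)}$ are identically distributed. Substituting,
\[
\mathbb E\big[\widehat{\mathrm{TUBE}}_{\psi,K}(x)\big] = a + c\,p_{\mathrm{model}}(x) = \log\psi(x) + \frac{p_{\mathrm{model}}(x)-\psi(x)}{\psi(x)},
\]
which is $\mathrm{TUBE}_\psi(x)$ from \eqref{eq:tube}.

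For the variance, adding the constant $a$ does not change it, and scaling by $c$ multiplies it by $c^2 = 1/\psi(x)^2$. So it remains to compute $\operatorname{Var}[\widehat{p}_{\mathrm{model},K}(x)]$. Here independence is needed: under the finite-variance assumption, the covariances between distinct summands vanish, and
\[
\operatorname{Var}\big[\widehat{p}_{\mathrm{model},K}(x)\big] = \frac{1}{K^2}\cdot K\,\operatorname{Var}_{\pi}\big[p_{\mathrm{model}}(x\mid\pi)\big] = \frac{1}{K}\operatorname{Var}_{\pi}\big[p_{\mathrm{model}}(x\mid\pi)\big].
\]
Multiplying by $1/\psi(x)^2$ yields the claimed formula.

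No step is a genuine obstacle. The only point requiring care is the status of $\psi$. If $\psi$ were built from the same orders as $\widehat{p}_{\mathrm{model},K}$, the estimator would no longer be affine in a sample mean, and both identities could fail. The hypothesis that $\psi(x)$ is deterministic, or equivalently that it is independent of the $\pi^{(k)}$ with the statement read conditionally, is what rules this out, so I would state that interpretation explicitly.
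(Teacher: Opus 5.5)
Your proposal is correct and follows essentially the same route as the paper: unbiasedness via linearity of expectation, since the estimator is affine in $\widehat{p}_{\mathrm{model},K}(x)$, and the variance via the $1/\psi(x)^2$ scaling together with independence of the $K$ summands, which gives the $1/K$ factor. Your added remark that the statement should be read conditionally when $\psi=\psi_M$ is built from independently drawn orders is a sensible clarification that the paper's proof does not spell out.
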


\begin{proof}
Since $\pi^{(1)},\ldots,\pi^{(K)}$ are i.i.d.,
\[
    \mathbb E_{\pi^{(1:K)}}[
        \widehat p_{\mathrm{model},K}(x)
    ]
    =
    \mathbb E_{\pi \sim p(\pi)}
    [
        p_{\mathrm{model}}(x\mid \pi)
    ]
    =
    p_{\mathrm{model}}(x).
\]
Therefore,
\[
    \mathbb E_{\pi^{(1:K)}}\!
    \left[
        \widehat{\mathrm{TUBE}}_{\psi,K}(x)
    \right]
    =
    \log \psi(x)
    +
    \frac{
        \mathbb E[
            \widehat p_{\mathrm{model},K}(x)
        ]
        -
        \psi(x)
    }{\psi(x)}
    =
    \log \psi(x)
    +
    \frac{
        p_{\mathrm{model}}(x)
        -
        \psi(x)
    }{\psi(x)}
    =
    \mathrm{TUBE}_{\psi}(x).
\]

For the variance, the only random term in
$\widehat{\mathrm{TUBE}}_{\psi,K}(x)$ is
$\widehat p_{\mathrm{model},K}(x)$, since $\psi(x)$ is fixed. Hence:
\[
    \operatorname{Var}
    \left[
        \widehat{\mathrm{TUBE}}_{\psi,K}(x)
    \right]
    =
    \frac{1}{\psi(x)^2}
    \operatorname{Var}
    \left[
        \widehat p_{\mathrm{model},K}(x)
    \right].
\]
By independence of the $K$ samples,
\[
    \operatorname{Var}
    \left[
        \widehat p_{\mathrm{model},K}(x)
    \right]
    =
    \operatorname{Var}
    \left[
        \frac{1}{K}\sum_{k=1}^{K} p_{\rm model}(x| \pi^{(k)})
    \right]
    =
    \frac{1}{K}
    \operatorname{Var}_{\pi \sim p(\pi)}
    [
        p_{\mathrm{model}}(x\mid \pi)
    ].
\]
\end{proof}

\section{Comparison of likelihood estimators}
\label{app:estimator_comparison}
In this appendix we give the precise definitions of the upper-bound estimators outlined in \S\ref{sec:related} (CUBO$_\beta$, TVO$_\Lambda$, IS-VG-B), derive their finite-sample Monte Carlo forms, and identify the structural source of bias in each. Throughout, we use the notation of \S\ref{sec:method}: $\pi$ is the latent generation structure (an order $\pi\in\mathcal{S}$ for AO-ARM, a grouped ordering $\pi\in\mathcal{G}$ for MDM), $p(\pi)$ is a distribution on orders defined by the model, and the standard $K$-sample Monte Carlo estimator $\widehat{p}_{\mathrm{model},K}(x)$ is the one defined in \eqref{eq:tube_mc}. Each estimator below uses this single shared sample bank.


\subsection{CUBO and the R\'enyi variational bound}
\label{app:cubo}

The $\chi^{\beta}$-divergence upper bound (CUBO) of \citep{dieng2017chivi} and the
R\'enyi variational bound of \citep{li2016renyi} are two equivalent formulations
of the same family of upper bounds on $\log p_{\text{model}}(x)$. In our latent-mixture form,
\begin{equation}
    \mathrm{CUBO}_\beta(x)
    \;=\;
    \frac{1}{\beta}\log
    \mathbb{E}_{\pi\sim p(\pi)}\!\left[
        p_{\mathrm{model}}(x|\pi)^{\beta}
    \right]
    \;\ge\;
    \log p_{\mathrm{model}}(x), \qquad \beta\ge 1,
    \label{eq:app_cubo_pop}
\end{equation}
with equality at $\beta=1$ and monotone non-decreasing in $\beta$ \citep{dieng2017chivi}. 
\paragraph{Empirical Monte Carlo estimator.} It can be practically estimated (Table~\ref{tab:estimator_zoo}) as
\begin{equation}
    \widehat{\mathrm{CUBO}}_{\beta,K}(x)
    \;=\;
    \frac{1}{\beta}\log
    \!\left(
        \frac{1}{K}\sum_{k=1}^{K}
        p_{\mathrm{model}}(x|\pi^{(k)})^{\beta}
    \right).
    \label{eq:app_cubo_mc}
\end{equation}

\paragraph{Source of bias.}
The estimator \eqref{eq:app_cubo_mc} is biased at finite $K$: by Jensen on the outer $\log$, $\mathbb{E}[\widehat{\mathrm{CUBO}}_{\beta,K}] \le \mathrm{CUBO}_\beta$. The population guarantee $\log p_{\mathrm{model}}(x) \le \mathrm{CUBO}_\beta$ therefore does not transfer to the empirical estimator: the chain $\log p_{\mathrm{model}}(x) \le \mathbb{E}[\widehat{\mathrm{CUBO}}_{\beta,K}]$ is no longer ensured, so the empirical expectation may fall below $\log p_{\mathrm{model}}(x)$ and void the upper bound. Figure~\ref{fig:cubo_heatmap} shows configurations of $(|\mathcal{B}|, \beta)$ where this occurs in practice. SPG~\citep{wang2026spg} observes this and notes that the linearization $\log x \le x-1$ would restore the bound at the cost of looseness, motivating their tighter-but-biased choice.

\begin{figure}[!h]
    \centering
    \noindent\makebox[\textwidth][c]{\includegraphics[width=1.0\textwidth]{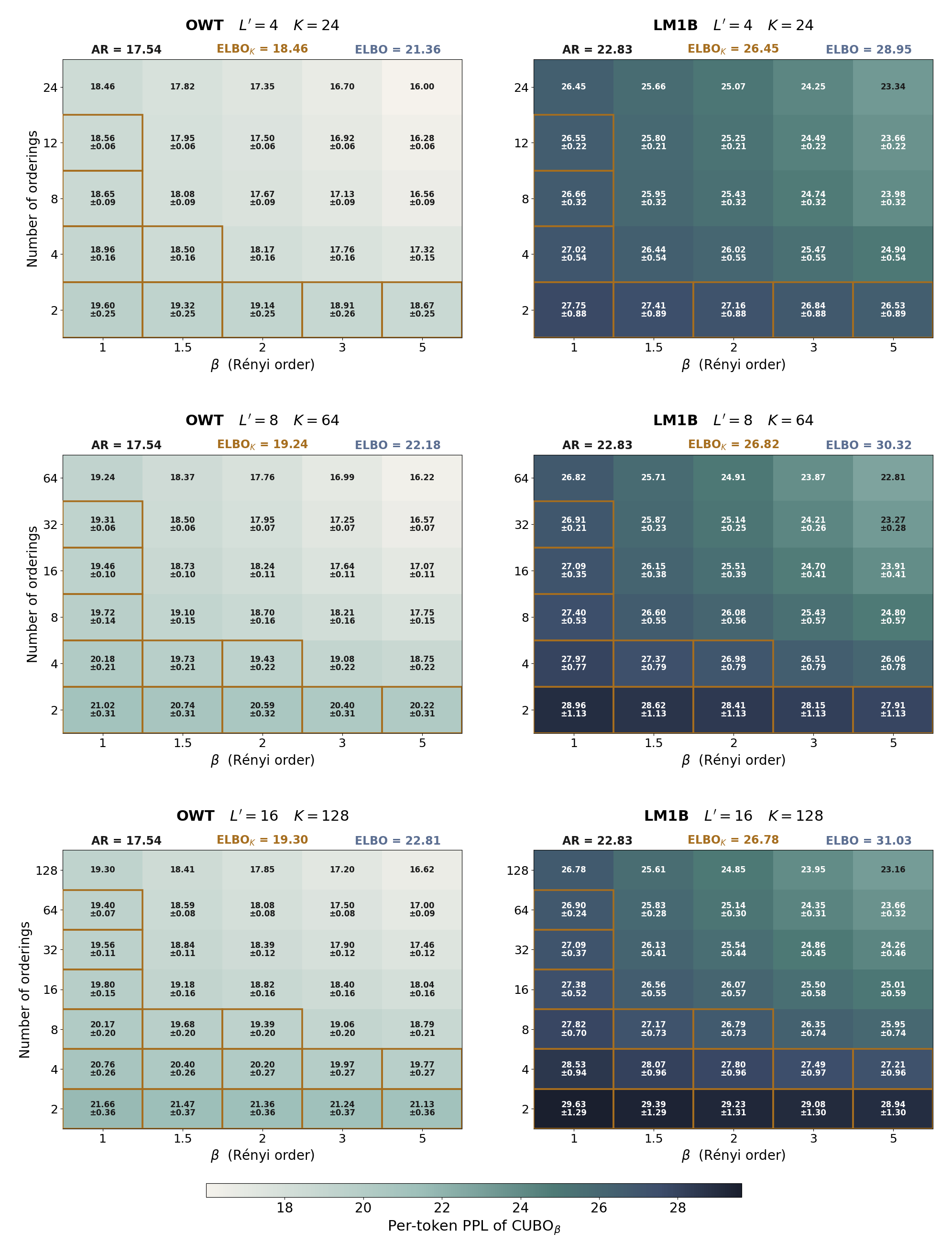}}
    \caption{\textbf{CUBO$_\beta$ diagnostic on $L'\in\{4,8,16\}$.} PPL of CUBO$_\beta$ as a function of the number of orderings and the Rényi exponent $\beta$, on OWT (left) and LM1B (right). Gold borders mark cells exceeding $\mathrm{ELBO}_K^{\mathrm{full}}$, where CUBO violates its nominal upper-bound guarantee. The only cell in the plane that is both valid and tight is $(\text{number of orderings = 24}, \beta=1)$, which coincides with $\mathrm{ELBO}_K^{\mathrm{full}}$ by construction; reducing number of orderings breaks the bound upward, increasing $\beta$ pushes it past the truth downward.}
    \label{fig:cubo_heatmap}
\end{figure}

\paragraph{Connection to SPG.}
SPG's ``evidence upper bound'' (EUBO) \citep{wang2026spg} is $\mathrm{CUBO}_\beta$ specialized to the MDLM absorbing forward process~\citep{sahoo2024mdlm}: SPG's Lemma~1 is \eqref{eq:app_cubo_pop} under $n\!\leftrightarrow\!\beta$, and Theorem~1 specializes it to the per-token MDLM kernel. This is distinct from the KL-based EUBO of \citep{ji2019eubo}, which requires posterior samples.

\subsection{TVO: Thermodynamic-integration Upper Bound}
\label{app:tvo}

The TVO of \citep{masrani2019tvo} starts from the thermodynamic identity
\begin{equation}
    \log p_{\mathrm{model}}(x)
    \;=\;
    \int_0^1
    \mathbb{E}_{\pi\sim q_\beta(\cdot|x)}
    \!\left[
        \log p_{\mathrm{model}}(x|\pi)
    \right]\,d\beta,
    \qquad
    q_\beta(\pi|x) \propto p_{\mathrm{model}}(x|\pi)^\beta,
    \label{eq:app_tvi}
\end{equation}
where $q_\beta$ interpolates between the prior $q_0 = p(\pi)$ and the posterior $q_1 = p_{\mathrm{model}}(\pi|x)$. The integrand is monotone in $\beta$~\citep{masrani2019tvo}, so the right Riemann sum on the equispaced grid $\beta_\lambda = \lambda/\Lambda$ upper-bounds the integral:
\begin{equation}
    \log p_{\mathrm{model}}(x)
    \;\le\;
    \frac{1}{\Lambda}\sum_{\lambda=1}^{\Lambda}
    \mathbb{E}_{\pi\sim q_{\beta_\lambda}(\cdot|x)}\!\left[\log p_{\mathrm{model}}(x|\pi)\right]
    \;=:\;
    \mathrm{TVO}^{U}_{\Lambda}(x).
    \label{eq:app_tvo_bound}
\end{equation}
\paragraph{Empirical Monte Carlo estimator.}
Using the same $K$ sampled orderings as in \eqref{eq:tube_mc} with self-normalized weights $\bar{w}_k^{(\beta)} = p_{\mathrm{model}}(x|\pi^{(k)})^\beta \,/\, \sum_{j} p_{\mathrm{model}}(x|\pi^{(j)})^\beta$, the empirical MC estimator (Table~\ref{tab:estimator_zoo}) is
\begin{equation}
    \widehat{\mathrm{TVO}}^{U}_{\Lambda,K}(x)
    \;=\;
    \frac{1}{\Lambda}\sum_{\lambda=1}^{\Lambda}\sum_{k=1}^{K}
    \bar{w}_k^{(\beta_\lambda)}\,\log p_{\mathrm{model}}(x|\pi^{(k)}).
    \label{eq:app_tvo_mc_upper}
\end{equation}

\paragraph{Sources of bias.}
Two structural sources. \emph{(i)} Self-normalized importance weights $\bar{w}_k^{(\beta)}$ have $\mathcal{O}(1/K)$ bias~\citep{owen2013monte}: as $\beta \to 1$ the target $q_\beta$ concentrates on a vanishing subset of orderings while the proposal stays uniform, so weights at large $\beta$ collapse onto a few outlier samples. As a result, $\mathbb{E}[\widehat{\mathrm{TVO}}^U_{\Lambda,K}] \neq \mathrm{TVO}^U_\Lambda$ at finite $K$, and the population upper-bound guarantee no longer ensures $\mathbb{E}[\widehat{\mathrm{TVO}}^U_{\Lambda,K}] \ge \log p_{\mathrm{model}}(x)$. \emph{(ii)} The Riemann discretization adds an $\mathcal{O}(1/\Lambda)$ residual that does not vanish at fixed $\Lambda$.


\subsection{IS-VG-B: Importance Sampling Variational Gap Bound}
\label{app:isvgb}

IS-VG-B \citep{struski2023bounding} constructs an upper bound on $\log p_{\mathrm{model}}(x)$ by adding a correction term to the (lower) $\mathrm{ELBO}_s$ bound. The strategy is to pair a two-point inequality with a tunable parameter $C$, optimize $C$ in closed form, and estimate the optimum using a second independent MC estimator drawn from the same distribution.

The starting two-point inequality is
\[
    \log\mathbb{E}X \;\le\; \mathbb{E}\log X - 1 + C + e^{-C}\,\mathbb{E}[Y/X],
\]
which holds for any $C \in \mathbb{R}$ which holds for any $C \in \mathbb{R}$ when $X$ and $Y$ are i.i.d. positive random variables, with optimum at $C^\star = \log\mathbb{E}[Y/X]$~\citep[Theorem~4 and Corollary~4]{struski2023bounding}. Substituting $C^\star$ collapses the right-hand side to $\mathbb{E}\log X + \log\mathbb{E}[Y/X]$. Replacing $X$ by the $s$-sample MC average $\bar{X}_s = \tfrac{1}{s}\sum_i X_i$ preserves the bound at each $s$ since $\mathbb{E}\bar{X}_s = \mathbb{E}X$, and $\log\mathbb{E}[\bar{Y}_s/\bar{X}_s] \to 0$ as $s \to \infty$ on bounded support~\citep[Theorem~5]{struski2023bounding}, so the bound becomes asymptotically tight. Specializing to our setting with $X \equiv p_{\mathrm{model}}(x|\pi)$ and uniform $p(\pi)$ gives
\begin{equation}
    \mathrm{IS\text{-}VG\text{-}B}_s(x)
    \;:=\;
    \underbrace{\mathbb{E}\!\left[\log\widehat{p}_{\mathrm{model},s}(x)\right]}_{=\;\mathrm{ELBO}_s}
    \;+\;
    \underbrace{\log\,\mathbb{E}\!\left[
        \frac{\widetilde{p}_{\mathrm{model},s}(x)}{\widehat{p}_{\mathrm{model},s}(x)}
    \right]}_{\text{correction term}}
    \;\ge\;
    \log p_{\mathrm{model}}(x),
    \label{eq:app_isvgb_pop}
\end{equation}
where $\widehat{p}_{\mathrm{model},s}$ and $\widetilde{p}_{\mathrm{model},s}$ are independent $s$-sample MC averages of $p_{\mathrm{model}}(x|\pi)$ as in \eqref{eq:tube_mc}. The first term is the $\mathrm{ELBO}_s$ \emph{lower} bound at sample size $s$, the correction is what flips the inequality into an \emph{upper} bound.

\paragraph{Empirical Monte Carlo estimator.}
The correction's outer expectation is itself estimated by Monte Carlo over $n_p$ independent pairs of $s$-sample MC estimators (total budget $K = 2\,s\,n_p$). Expanding the per-pair averages $\widehat{p}_{\mathrm{model},s}^{(j)}(x) = \tfrac{1}{s}\sum_{i=1}^{s} p_{\mathrm{model}}(x|\pi^{(j,i)})$ and $\widetilde{p}_{\mathrm{model},s}^{(j)}(x) = \tfrac{1}{s}\sum_{i=1}^{s} p_{\mathrm{model}}(x|\widetilde\pi^{(j,i)})$ explicitly recovers the form shown in Table~\ref{tab:estimator_zoo}:
\begin{equation}
    \widehat{\mathrm{IS\text{-}VG\text{-}B}}_{s,n_p}(x)
    \;=\;
    \frac{1}{n_p}\sum_{j=1}^{n_p}\log\!\left(\frac{1}{s}\sum_{i=1}^{s} p_{\mathrm{model}}(x|\pi^{(j,i)})\right) \;+\; \log\!\left(\frac{1}{n_p}\sum_{j=1}^{n_p}\frac{\sum_{i=1}^{s} p_{\mathrm{model}}(x|\widetilde\pi^{(j,i)})}{\sum_{i=1}^{s} p_{\mathrm{model}}(x|\pi^{(j,i)})}\right),
    \label{eq:app_isvgb_mc}
\end{equation}
where $\pi^{(j,i)}$ and $\widetilde\pi^{(j,i)}$ are the $i$-th i.i.d. samples of the $j$-th X-side and Y-side MC estimators, respectively, both drawn from $p(\pi)$. The first term averages $n_p$ $\mathrm{ELBO}_s$ samples, the second averages the ratios \emph{inside} the log to estimate the population correction.

\paragraph{Source of bias.}
The first term in \eqref{eq:app_isvgb_mc} is an MC estimate of $\mathrm{ELBO}_s$, which by Jensen lies below $\log p_{\mathrm{model}}(x)$ at any finite $s$, this gap is part of the construction, since the correction is designed precisely to lift the bound back above $\log p_{\mathrm{model}}(x)$. The second term is meant to do that lifting, but is itself a $\log$ of a Monte Carlo mean: letting $r^{(j)} = \sum_{i=1}^{s} p_{\mathrm{model}}(x|\widetilde\pi^{(j,i)}) \,\big/\, \sum_{i=1}^{s} p_{\mathrm{model}}(x|\pi^{(j,i)})$ denote the per-pair ratio, Jensen gives $\mathbb{E}\!\left[\log\tfrac{1}{n_p}\sum_j r^{(j)}\right] \le \log\mathbb{E}[r]$, so the empirical correction underestimates the population value and the empirical IS-VG-B can fall below $\log p_{\mathrm{model}}(x)$ at finite $K$. \citep{struski2023bounding} acknowledge this in their Limitations: the empirical estimator ``results in nonrigorous bounds.''

\section{Experimental details}
\label{app:reproduction}

This appendix provides the experimental details for the results reported in the main paper: models and datasets (\S\ref{app:models_and_checkpoints}), construction of the orders (\S\ref{app:permutations}), estimator hyperparameters used in the main tables (\S\ref{app:estimator_hyperparameters}), and compute resources (\S\ref{app:compute}).

\subsection{Models and datasets}
\label{app:models_and_checkpoints}

We evaluate the public BMs from~\citep{arriola2025block} (\url{https://github.com/kuleshov-group/bd3lms}). For OWT, we use publicly available checkpoints.


\textbf{LM1B training details.} For LM1B, we follow the original BD3-LM training procedure. We first train an MDLM with the BD3-LM repository's default hyperparameters (global batch size $512$, learning rate $3\!\times\!10^{-4}$, $150{,}000$ steps, sentence-wrap data preprocessing), and then fine-tune it separately for each block size $L'\in\{4,8,16\}$ to obtain the BD3-LM checkpoints. The ARM baseline is trained independently from scratch with the same architecture, tokenizer, and sequence length. The synthetic-finetuned ARM $\psi_{\mathrm{ARM\text{-}FT}}$ used in Figure~\ref{fig:psi_choice} is a one-off OWT experiment and is not available for LM1B, so the corresponding bar is missing from the LM1B panel of Figure~\ref{fig:group_size_lm1b}.

\subsection{Sampling orderings}
\label{app:permutations}

For each block size $L'$, we precompute a fixed set of $M$ latent orderings per block, with each ordering specifying the unmasking sequence over the $L'$ block positions. The bank is constructed once per $(L', \text{dataset})$ pair and shared across all estimators in the table. Number of latent orderings and construction protocol are summarized in Table~\ref{tab:reproduction_banks}.

\begin{table}[h]
\centering
\small
\caption{\textbf{Latent orderings construction per block size.}}
\label{tab:reproduction_banks}
\renewcommand{\arraystretch}{1.15}
\setlength{\tabcolsep}{6pt}
\begin{tabular}{@{} c l l @{}}
\toprule
\textbf{$L'$} & \textbf{Construction} & \textbf{Number of orderings} \\
\midrule
$4$  & full enumeration of $4!$ orederings        & $24$ \\
$8$  & random sampling          & $64$ \\
$16$ & random sampling          & $128$ \\
\bottomrule
\end{tabular}
\end{table}

To cover the MDM regime, the same set of orderings is evaluated under the multi-step parallel-generation schedule with $\mathrm{NFE}\in\{1,2,4,\dots,L'\}$, where $\mathrm{NFE}$ is the number of forward passes through the MDM per block. Smaller $\mathrm{NFE}$ corresponds to coarser parallel denoising. The validation-set size used per evaluation is $100$ batches with the eval batch size of the BD3-LM repo, giving roughly $4\!\times\!10^{5}$ per-block scores per ordering at $L'=4$.

\subsection{Estimator hyperparameters}
\label{app:estimator_hyperparameters}

All estimators in Tables~\ref{tab:owt_ppl_main}--\ref{tab:lm1b_ppl_main} share a common per-row number of orderings depending on $L'$. This number is split between the two halves used by TUBE, and the corresponding hyperparameters of CUBO, TVO$_U$, and IS-VG-B are chosen so that all estimators draw orderings per row. Table~\ref{tab:reproduction_estimator_hp} summarizes the configuration.

\begin{table}[h]
\centering
\small
\caption{\textbf{Estimator hyperparameters used in the main tables.} The per-row number of orderings is fixed per block size so that estimators are compute-comparable across rows. ELBO$_K$ uses all available orderings deterministically.}
\label{tab:reproduction_estimator_hp}
\renewcommand{\arraystretch}{1.15}
\setlength{\tabcolsep}{6pt}
\begin{tabular}{@{} l c c c @{}}
\toprule
\textbf{Estimator} & \textbf{$L'=4$} & \textbf{$L'=8$} & \textbf{$L'=16$} \\
\midrule
TUBE (split $B/2 + B/2$)                                  & $12+12$    & $32+32$    & $64+64$ \\
CUBO$_{\beta=2}$ (single MC set of size $B$)              & $24$     & $64$       & $128$ \\
TVO$_U$ (Riemann partitions $\Lambda$, $K=B$)             & $\Lambda=200$ & $\Lambda=200$ & $\Lambda=200$ \\
IS-VG-B ($n_{\mathrm{p}}=2$, $k=B/4$)                  & $2\times 6$  & $2\times 16$  & $2\times 32$ \\
ELBO$_K$ (full bank, deterministic)                        & $M=24$   & $M=64$    & $M=128$ \\
ELBO (15 random-seed runs)                                 & 15 seeds & 15 seeds  & 15 seeds \\
\midrule
MC re-seeds for $\pm$ std                                  & $10$ & $10$ & $10$ \\
\bottomrule
\end{tabular}
\end{table}

The heatmap in Appendix \ref{app:estimator_comparison} sweeps wider hyperparameter ranges $\beta\in\{1.0,1.5,2.0,3.0,5.0\}$ for CUBO. The $\pm$ standard deviations in the tables are computed across $10$ MC re-seeds, each drawing a fresh per-row subset of size $B$ from the bank.

\subsection{Compute}
\label{app:compute}

All evaluations run on a single NVIDIA H200 140\,GB GPU under PyTorch 2.5.1 / CUDA 12.1. Wall-clock times for the OWT validation split with $100$ batches are summarized in Table~\ref{tab:reproduction_compute}. LM1B times are within $30\%$ of OWT. Total evaluation across both datasets and all three block sizes is roughly $50$--$60$ GPU-hours.

\begin{table}[h]
\centering
\small
\caption{\textbf{Wall-clock per evaluation on a single H200 140\,GB} (OWT/LM1B, $100$ batches, default eval batch size).}
\label{tab:reproduction_compute}
\renewcommand{\arraystretch}{1.15}
\setlength{\tabcolsep}{6pt}
\begin{tabular}{@{} l c c c @{}}
\toprule
\textbf{Evaluation} & \textbf{$L'=4$} & \textbf{$L'=8$} & \textbf{$L'=16$} \\
\midrule
ARM baseline (per-block AR)                                & $12$\,m  & $12$\,m  & $12$\,m \\
ARM-FT (synth-finetuned, OWT only)                          & $14$\,m  & $14$\,m  & $14$\,m \\
ELBO ($15$ seeds)                                          & $28$\,m  & $32$\,m  & $38$\,m \\
Ordering bank ($M\in\{24,64,128\}$ orderings)            & $35$\,m  & $1$\,h\,$25$\,m  & $2$\,h\,$50$\,m \\
Multi-step bank (full $\mathrm{NFE}$ schedule)             & $42$\,m  & $1$\,h\,$40$\,m  & $3$\,h\,$30$\,m \\
\bottomrule
\end{tabular}
\end{table}


\newpage

\end{document}